\documentclass{article}

\IfFileExists{iclr2027_conference.sty}{%
  \usepackage{iclr2027_conference,times}%
}{%
  \usepackage{times}
  \usepackage[letterpaper,left=1in,right=1in,top=1in,bottom=1in]{geometry}
  
  \newcommand{\iclrfinalcopy}{}
  \usepackage[round]{natbib}
  \makeatletter
  \renewcommand{\maketitle}{%
    \begin{center}
      {\LARGE\bfseries \@title \par}\vspace{1.2em}
      {\normalsize \@author \par}\vspace{0.9em}
    \end{center}}
  \makeatother
}

\usepackage[T1]{fontenc}   
\usepackage{amsmath,amssymb,amsthm}
\usepackage{booktabs}
\usepackage{graphicx}
\usepackage{xcolor}
\usepackage{tikz}
\usetikzlibrary{arrows.meta,positioning,calc}
\usepackage[colorlinks=true,linkcolor=blue!55!black,citecolor=blue!55!black,
            urlcolor=blue!55!black]{hyperref}
\usepackage{url}
\usepackage{multirow}
\usepackage{tabularx}
\usepackage{array}
\usepackage{caption}
\usepackage{placeins}   

\newtheorem{proposition}{Proposition}
\theoremstyle{remark}

\iclrfinalcopy

\definecolor{good}{HTML}{2E7D32}
\definecolor{bad}{HTML}{B71C1C}
\definecolor{acc}{HTML}{E07B39}
\definecolor{navy}{HTML}{1F4E79}
\newcommand{\pass}{\textcolor{good}{\textbf{pass}}}

\newcommand{\pid}[1]{\textcolor{black!45}{\,(#1)}}

\renewcommand{\pid}[1]{}   

\newcommand{\dojepa}{\textsc{Do-JEPA}}
\newcommand{\Obs}{\textsc{Obs}}
\newcommand{\Aug}{\textsc{Aug}}
\newcommand{\Paired}{\textsc{Paired}}
\newcommand{\Mask}{\textsc{Mask}}

\newcommand{\anull}{a_{\varnothing}}
\newcommand{\R}{\mathbb{R}}

\newcommand{\dz}{\Delta z}
\newcommand{\dzh}{\widehat{\Delta z}}
\newcommand{\Lpred}{\mathcal{L}_{\mathrm{pred}}}
\newcommand{\Ld}{\mathcal{L}_{\Delta}}
\newcommand{\Lsup}{\mathcal{L}_{\mathrm{sup}}}
\newcommand{\Ledge}{\mathcal{L}_{\mathrm{edge}}}
\newcommand{\Linv}{\mathcal{L}_{\mathrm{inv}}}
\newcommand{\Lctx}{\mathcal{L}_{\mathrm{ctx}}}
\newcommand{\Fb}{F_{\mathrm{base}}}
\newcommand{\Fa}{F_{\mathrm{adapt}}}
\newcommand{\pp}{\,\text{pp}}

\title{\dojepa: From Masking to Intervention \\
       in Latent World Models}

\author{
Hossein Resani$^{1}$ \&
Javen Qinfeng Shi$^{1,2}$\\
$^{1}$Australian Institute for Machine Learning, Adelaide University\\
$^{2}$Responsible AI Research Centre
}

\begin{document}

\maketitle

\begin{abstract}
Latent world models are trained to predict what happens next, so nothing in their
objective separates what an action caused from what merely co-occurred with it.
Object-masking models such as C-JEPA intervene on what the predictor can see; we
intervene on what physically happens. From one saved simulator state we run the
dynamics under an action $a$ and under a reference action $\anull$, and train the
model to predict the difference $\Delta z=z^{a}-z^{\anull}$ between the two latent
futures. The resulting objective, \dojepa, has an effect loss, a support loss
(where the action enters), a propagation loss (where its effect travels) and
invariance losses (what must not change). In a synthetic system with
object-aligned variables, support supervision finds the directly intervened object
in $99.95\%$ of test cases, where a sparse action mask sends the action to a
nuisance slot in every case, and response-onset supervision recovers the
ring-shaped propagation graph (edge AUROC $0.975$ vs.\ $0.624$). From pixels, the
effect loss beats a control trained on exactly the same data: it lowers latent
effect error by $28.4\%$ on an end-to-end LeWM model and physical
effect error by $13.5\%$ when trained and tested on natural action sequences, and on three
independently generated CausalWorld benchmarks it lowers responsive effect error
by about $20\%$ under physics shifts and the latent context sensitivity of
predicted effects by $66\%$. Trained from scratch it costs factual accuracy;
fine-tuning an existing model with it removes this cost. Together, these results show 
that intervening on the world, rather than on what the model sees, helps latent world models
predict what their actions cause.

\end{abstract}

\section{Introduction}\label{sec:intro}

Latent world models learn a transition $\hat z_{t+1}=F_\theta(z_t,a_t)$ in a
learned embedding space and use it to predict and to plan
\citep{ha2018worldmodels,hafner2023dreamerv3,zhou2024dinowm}; joint-embedding
predictive architectures (JEPAs) are a common choice
\citep{lecun2022path,assran2023ijepa,assran2025vjepa2,balestriero2025lejepa}. A
planner asks such a model a counterfactual question: what would happen if I took
this action instead of that one? The model is trained on a different question:
what happened next in the logged data? The two come apart when something the
action does not control co-varies with the outcome, for example a hidden cause
shared by the behaviour policy and the rendered scene. The model can then fit the
logs by crediting the wrong variable, and prediction error does not show it. In
our synthetic benchmark (Sec.~\ref{sec:q12}), the model with the lowest
prediction and effect error has no notion of where an action enters, and a model
with a sparse action-entry mask sends every action into a nuisance slot.

Object-centric JEPAs add structure but keep the observational target. C-JEPA
\citep{nam2026cjepa} masks whole object slots, so that the predictor must infer a
hidden object from the others. As its authors note, this intervenes on
observability, not on the process that generates the data: masking an object
changes what the model sees, not what the world does, and the targets still come
from the logged rollout.

\begin{figure}[t]
\centering
\includegraphics[width=0.82\linewidth]{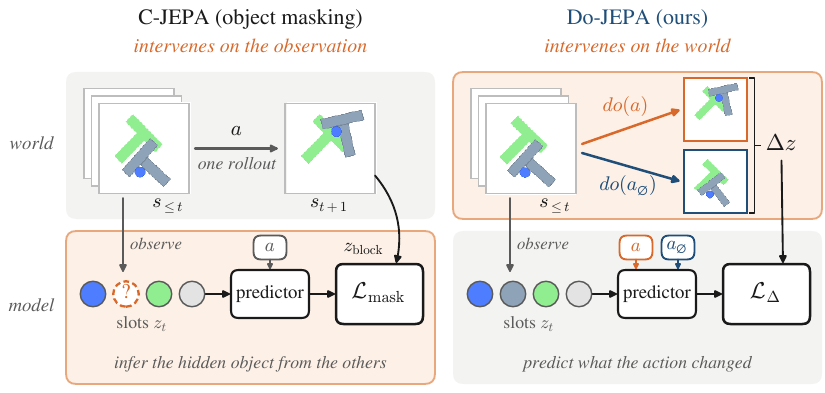}
\caption{\textbf{Masking versus intervention.} Both methods see the same
history, slots and action; orange marks where each intervenes. \textbf{Left:}
C-JEPA \citep{nam2026cjepa} hides an object's slots and predicts them from the
others, with targets from the same logged rollout: it changes what the model sees.
\textbf{Right:} \dojepa\ runs one saved simulator state forward under $a$ and under
$\anull$ and supervises the difference $\Delta z$ of the two latent futures
(Eq.~\ref{eq:delta}): it changes what happens. Frames: Push-T.}
\label{fig:teaser}
\end{figure}

A simulator allows a stronger intervention. We save its full state and run it
forward twice, once with an action $a$ and once with a reference action $\anull$
(Fig.~\ref{fig:teaser}). State, context, history and noise are shared, so the
difference between the two latent futures is the change caused by executing $a$
instead of $\anull$ in that state, and nothing else. We call objectives built on
such pairs \dojepa: C-JEPA intervenes on the observation, \dojepa\ on the world. A
world model that knows what its actions do should say what an action changed
(\emph{effect}), where it entered (\emph{support}), where its effect can travel
(\emph{propagation}) and what must stay the same (\emph{invariance}). \dojepa\ has
one loss for each, computed from the pairs alone. We test each where it can be
measured: support and propagation against known ground truth, invariance with
given variables, and the effect loss end to end from pixels against a control that
sees exactly the same data.


Our contributions are summarised as follows. First, we introduce \dojepa, an
interventional objective for latent world models that derives four training
signals from exact same-state rollout pairs: the effect of an action, where it
enters, where its effect travels and what must not change. It needs no graph
labels or object identities (Sec.~\ref{sec:method}). Second, we show that the direct target and the propagation graph can be recovered in controlled settings. Support supervision identifies the
directly intervened object in $99.95$--$100\%$ of test cases on three seeds,
whereas a sparse mask picks the wrong slot at chance level. Onset supervision
recovers the propagation graph, and the observation rate decides whether one-hop
edges or only reachability can be recovered (Sec.~\ref{sec:q12}). Third, we
provide evidence from pixels against an information-matched control: paired
supervision improves effect prediction on end-to-end LeWM and on three
CausalWorld benchmarks against a control trained on the same data, and
fine-tuning with it kept factual accuracy in our tests (Sec.~\ref{sec:q4}).

\section{Related Work}\label{sec:related}

\paragraph{Latent world models and JEPAs.}
JEPAs predict in embedding space \citep{lecun2022path,assran2023ijepa} and now
drive action-conditioned video models and planners
\citep{assran2025vjepa2,zhou2024dinowm}. LeWM \citep{maes2026lewm} trains such a
model end to end from pixels with the SIGReg regulariser
\citep{balestriero2025lejepa}. We use LeWM as a backbone and, with its CEM
planner \citep{garcia1989mpc,rubinstein1999cem} unchanged, for a planning study
(App.~\ref{app:planning}).

\paragraph{Interventions and causal world models.}
Interventions make latent causal variables identifiable where observations do not
\citep{scholkopf2021crl,lippe2022citris,lippe2023biscuit,brehmer2022weakly,ahuja2023interventional,vonkugelgen2021self};
we share this premise but do not recover causal variables or graphs from pixels.
Other work learns modular mechanisms \citep{lei2022vcd} or interaction graphs
\citep{li2020causaldiscovery}, chooses actions that separate causal hypotheses
\citep{sontakke2021curiosity}, or argues for intervention-centric evaluation
\citep{liu2023causaltriplet}. Closest to us, C-JEPA \citep{nam2026cjepa} masks
object-slot trajectories; we re-implement it on the same slots. 


\section{Method}\label{sec:method}

\subsection{Setting}\label{sec:setting}

\paragraph{Setup and notation.}
An environment has a physical state $s_t$, an action $a_t\in\mathcal A$ and a
context $c\in\mathcal C$ that changes the observation but not the physics: the
next state is $s_{t+1}=f(s_t,a_t,\varepsilon_t)$, with exogenous noise
$\varepsilon_t$, and the observation is $o_t=g(s_t,c)$. Examples of $c$ are a
pattern rendered into the image and a vector of nuisance variables. An encoder
$E_\psi$ maps an observation to $K$ slots, $z_t=(z^1_t,\dots,z^K_t)$ with
$z^i_t\in\R^d$, and a predictor $F_\theta$ maps a history of $H$ latent states
and an action to the next latent state:
\begin{equation}
\hat z_{t+1}=F_\theta\big(z_{t-H+1:t},\,a_t\big)\in\R^{K\times d}.
\label{eq:predictor}
\end{equation}
Object-slot models have one slot per object or nuisance variable ($K>1$); the
LeWM backbone has one global latent ($K=1$, $d=192$). A hat marks a prediction,
$N$ is the batch size, and Table~\ref{tab:notation} lists every symbol with its
shape, role and availability.

\paragraph{JEPA world models.}
A JEPA world model is trained to predict the latent of the next observation,
\begin{equation}
\Lpred=\frac1N\sum_{n=1}^{N}\big\lVert\,\hat z_{t+1,n}-z_{t+1,n}\big\rVert_2^2 ,
\label{eq:pred}
\end{equation}
where $z_{t+1,n}$ encodes the observed next frame of example $n$, together with a
term that prevents collapse (an exponential-moving-average target encoder, or
SIGReg in LeWM). Targets, here and below, are fixed for given variables or a
frozen encoder; in our end-to-end LeWM-class models, as in LeWM, gradients also
flow into them. Eq.~\ref{eq:pred} is observational, meaning that no term says which part of the
observed change the action caused.

\paragraph{Why masked prediction is not enough.}
C-JEPA recovers hidden slots from the others, with targets from the same logged
trajectory. This teaches object interactions but does not change the data. If a
nuisance $c$ is correlated with the outcome in training and the state estimate is
imperfect, a predictor can use $c$ and fails when the correlation changes.
\dojepa\ changes the data instead, adding a rollout that differs only in the
action.

\subsection{Paired interventions and what they reveal}\label{sec:pairs}

\paragraph{Paired trajectories.}
From a state $s_t$ we save the full simulator state, including contact caches and
the random-number state, and run it twice for $k$ steps with the same context and
noise. The factual branch applies $a$; the reference branch applies $\anull$ (a
physical no-op unless stated otherwise) and later actions are identical. The result
is a \emph{paired trajectory} that differs only in the intervention, $do(a)$
versus $do(\anull)$; replaying a branch reproduces it exactly (maximum difference
$0.0$). The paired effect at step $t+\ell$ and its prediction are
\begin{equation}
\dz_{t+\ell}=z^{a}_{t+\ell}-z^{\anull}_{t+\ell},
\qquad
\dzh_{t+\ell}=\hat z^{\,a}_{t+\ell}-\hat z^{\,\anull}_{t+\ell},
\qquad \ell=1,\dots,k,
\label{eq:delta}
\end{equation}
both in $\R^{K\times d}$. The target $\dz$ needs both branches and exists only at
training time, whereas the prediction $\dzh$ runs the predictor twice from the same
history. Because the branches share state, context and noise, $\dz$ is the effect
of replacing $\anull$ by $a$ in this state, not an average over states. Any latent
component shared by both branches cancels: exactly for an additive context
contribution, but not for one that interacts with the state. A pair is
\emph{responsive} if the manipulated object moves by more than about $2$\,px
(Push-T) or $2$\,mm (CausalWorld; App.~\ref{app:benchmarks}).

\paragraph{Support, responses and propagation.}
The \emph{direct target} of an intervention is the set of variables it changes at
once. With object-aligned slots, slot $i$ belongs to the \emph{direct support} if
its immediate response $\lVert\dz^{\,i}_{t+1}\rVert$ is large. This label comes
from the pair, not from the simulator's graph, and it is correct only if one
observation step is too short for the effect to reach a second object
(\emph{temporal resolvability}; Sec.~\ref{sec:q12}). The \emph{response set} $R$
contains every slot $j$ with $\lVert\dz^{\,j}_{t+\ell}\rVert>\tau$ for some
$\ell\le k$; its members outside the direct support are the \emph{downstream
responders}, and slots outside $R$ are the \emph{invariant variables} of the
pair. The \emph{onset} of slot $j$ is
$t_j=\min\{\ell:\lVert\dz^{\,j}_{t+\ell}\rVert>\tau\}$. If $j$ starts to respond
one step after $i$, then $i$ is a candidate cause of $j$'s response, which gives a
\emph{propagation label} $y^{\mathrm{edge}}_{ji}\in\{0,1\}$ for the message from
slot $i$ to slot $j$. These labels describe how this intervention propagated. 
Therefore, an interaction that the rollout does not use receives no positive label.

\paragraph{Nuisance and context.}
A \emph{context twin} keeps the state and the action and changes only the
context, $o'=g(s,c')$, so any difference between the predicted effects under $c$
and $c'$ is caused by the context alone. At test time a \emph{nuisance shift}
reverses a training correlation between $c$ and the outcome, a \emph{mechanism
shift} changes physical parameters such as mass, and a
\emph{composed shift} does both; any of these makes a split out-of-distribution
(OOD).

\subsection{Model and objective}\label{sec:model}

\begin{figure}[t]
\centering
\includegraphics[width=0.84\linewidth]{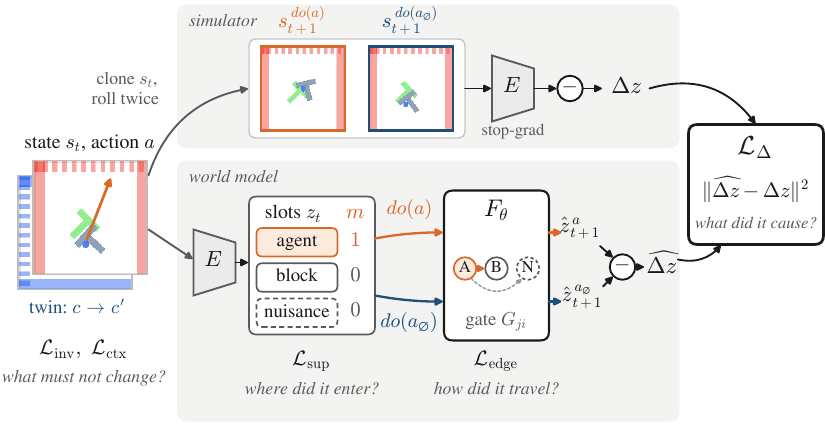}
\caption{\textbf{\dojepa\ training.} Top: the simulator runs the
saved state $s_t$ under $do(a)$ and $do(\anull)$; the encoder $E$ (frozen in the
slot models shown here) gives the target $\Delta z$. Bottom: the world model encodes the observation into
slots $z_t$, routes the action into a slot through the mask $m$ and propagates its
effect through the gates $G_{ji}$; $\dzh$ is the difference of its two predictions.
$\Ld$: what the action caused; $\Lsup$: where it entered; $\Ledge$: how it
travelled; $\Linv,\Lctx$: what must not change.}
\label{fig:overview}
\end{figure}

\paragraph{Architecture.}
The slot predictor for object-aligned variables (Fig.~\ref{fig:overview}) has
three parts. (i) A transformer over the history predicts an action-free next
state $b_{t+1}\in\R^{K\times d}$. (ii) An \emph{action-entry mask}
\begin{equation}
m=\operatorname{softmax}_{i}\big(h_\phi(z^i_t,a)/T_m\big)\cdot\mathbb 1[a\neq\anull]\;\in[0,1]^K ,
\label{eq:mask}
\end{equation}
where $h_\phi$ scores slot $i$ for action $a$ and $T_m$ is a temperature, sums to
one for a real action and is zero for $\anull$. It gates the direct residual
$r^i=m_i\,u_\phi(z^i_t,a)$ of a small network $u_\phi$ with bounded (tanh) output.
(iii) Gated message passing, repeated $P$ times, moves the residual between
slots:
\begin{equation}
r\leftarrow r+\gamma\tanh\!\big(\big((A\odot G)\,r\,W_v^{\top}\big)W_o^{\top}\big),
\qquad G_{ji}=\sigma\big(q_\phi(z^j_t,z^i_t)/T_G\big),\quad G_{jj}=0 ,
\label{eq:prop}
\end{equation}
where $r\in\R^{K\times d}$ has one row per slot, $A$ are attention weights,
$G_{ji}$ gates the message from slot $i$ to slot $j$ and depends on the slot
states only, $W_v,W_o$ are bias-free linear maps and $\gamma$ is a fixed scale.
Gated weights are not renormalised, so a closed gate blocks the message. The
prediction is $\hat z^{\,a}_{t+1}=b_{t+1}+r$. Only $r$ depends on the action and
$m=0$ for $\anull$, so $\dzh_{t+1}=r$. In state-based Push-T the action is known to
move the agent, so it enters the agent slot and $m$ is not learned. The LeWM-class
backbones ($K=1$) have no mask or gates.

\paragraph{Targets and losses.}
All targets come from the paired rollouts or the context twin; none uses the
simulator's graph or object identities (Table~\ref{tab:targets},
App.~\ref{app:notation}). The \emph{effect loss} compares predicted and true
paired effects, $\Ld=\frac1N\sum_{n}\lVert\dzh_n-\dz_n\rVert_2^2$. The support,
propagation, invariance and context losses are
\begin{align}
\Lsup&=-\frac1N\sum_{n}\sum_{i=1}^{K}y^{\mathrm{sup}}_{n,i}\log m_{n,i},
\qquad y^{\mathrm{sup}}_{n,i}=\frac{\lVert\dz^{\,i}_{n,t+1}\rVert^2}{\sum_{i'}\lVert\dz^{\,i'}_{n,t+1}\rVert^2},
\label{eq:lsup}\\
\Ledge&=\frac1N\sum_{n}\Big(\mathrm{BCE}_{\mathrm{bal}}\big(G_n,\,y^{\mathrm{edge}}_n\big)+\lambda_{\mathrm{sp}}\lVert G_n\rVert_1\Big) ,
\label{eq:ledge}\\
\Linv&=\frac1N\sum_{n}\sum_{j\notin R_n}\Big(\big\lVert\dzh^{\,j}_n\big\rVert_2^2+\lambda_{\mathrm{g}}\textstyle\sum_{i}G_{n,ji}\Big),
\label{eq:linv}\\
\Lctx&=\frac1N\sum_{n}\big\lVert\,\dzh_n(c)-\dzh_n(c')\big\rVert_2^2 .
\label{eq:lctx}
\end{align}
$\Lsup$ is a cross-entropy between the mask and the normalised immediate response
energy (an entropy regulariser keeps $m$ sharp); $\Ledge$ is a class-balanced binary
cross-entropy between gates and onset labels with an $\ell_1$ penalty; $\Linv$
pushes predicted effects on non-responders to zero and closes the gates into them;
$\Lctx$ asks the predicted effect to be the same under a context and its twin (in
state-based Push-T it compares factual predictions of the physical slots; in the
end-to-end models it averages Eq.~\ref{eq:lctx} with the same term on the target
effects, without stopping gradients).

\paragraph{Full objective and model variants.}
The full objective is
\begin{equation}
\mathcal L=\Lpred(a)+\lambda_{\mathrm{ref}}\,\Lpred(\anull)+\lambda_\Delta\Ld
+\lambda_{\mathrm{sup}}\Lsup+\lambda_{\mathrm{edge}}\Ledge
+\lambda_{\mathrm{inv}}\Linv+\lambda_{\mathrm{ctx}}\Lctx+\mathcal R ,
\label{eq:full}
\end{equation}
where $\Lpred(a)$ and $\Lpred(\anull)$ apply Eq.~\ref{eq:pred} to the two
branches and $\mathcal R$ collects the backbone's regulariser and the
setting-specific terms of App.~\ref{app:training}. The variants differ only
in how they use the same collected data. \Obs\ uses the factual branch only. \Aug,
the \emph{information-matched control}, uses the factual branch, the reference
branch and, where used, the context twin as ordinary examples
($\lambda_{\mathrm{ref}}>0$, $\lambda_\Delta=0$) but never compares them. \Paired\
adds $\Ld$; \dojepa\ (full) adds $\Lctx$ and, in slot models, the slot losses
listed for each setting in Table~\ref{tab:benchmarks}. \Mask\ is our re-implementation of C-JEPA on the
same slots.

\paragraph{Training, inference and information.}
Training uses paired trajectories (plus context twins for $\Lctx$). At test time
the model receives only an observation history and an action; it predicts an
effect by running the predictor with $a$ and with $\anull$ from the same history
(Eq.~\ref{eq:delta}), and planning uses only the factual prediction. The losses
need an exactly restorable simulator (all
losses), object-aligned slots ($\Lsup$, $\Ledge$, $\Linv$) and a way to change the
context without changing the physics ($\Lctx$). \Paired\ and \Aug\ receive the same
branches, so their comparison is matched in information. In the SCMs and
state-based Push-T the model's variables are the physical state; in the pixel
settings physical state is used only to build benchmarks and to evaluate
(Table~\ref{tab:info}).

\section{Experiments}\label{sec:experiments}

We try to answer four questions, each in the setting that isolates it: \textbf{Q1:}
where does the action enter? \textbf{Q2:} where does its effect travel? \textbf{Q3:} what must not change? \textbf{Q4:} does the
effect loss help when the representation is learned from pixels?
App.~\ref{app:benchmarks}
and~\ref{app:training} describe the settings and training. \emph{Protocol.} Checkpoints and hyperparameters are
selected on in-distribution (IID) validation data, except in two development runs
that we mark where they occur. In \emph{preregistered}
experiments, hypotheses, thresholds and seeds were fixed in a protocol file before
data collection or before the OOD splits were opened, and OOD splits were
evaluated once after the checkpoint was locked. \emph{Confirmed} means that a
result holds on at least $2$ of $3$ fresh seeds. App.~\ref{app:metrics} defines the metrics.


\subsection{Q1--Q2: Where does the action enter, and where does its effect travel?}\label{sec:q12}

\paragraph{Question and setup.}
Can a model find the directly intervened object, and the paths along which its
effect spreads, from paired data alone? Sparsity is often assumed to localise an
intervention; if it did, support labels would be unnecessary. A synthetic
structural causal model (SCM) gives known ground truth: four objects on a directed
ring ($i\to i{+}1 \bmod 4$) and three nuisance slots. The action, an impulse at a
noisy contact point biased towards the next object, does not name its target; the
first step changes only the direct target, and the effect then spreads along the
ring for two steps. A hidden context makes the nuisances co-vary with the action
in training ($\rho=0.95$); the test split reverses this. We compare \Mask\ (global
action), \Mask$+\Ld$, a sparse action mask, the mask $+\Ld$ and the mask
$+\Ld+\Lsup$\pid{hard}, then gates without and with $\Ledge$ and the gate term of
$\Linv$\pid{hard-edge}. In state-based Push-T, where the agent is the known direct
target, we ask whether the gate $G_{\text{block}\leftarrow\text{agent}}$ learns
\emph{when} the agent moves the block\pid{P1}.

\begin{table}[t]
\centering
\caption{\textbf{Where does the action enter?} Synthetic SCM,
reversed-correlation test split, seed 7. Top-1 over all seven slots (chance
$1/7$) and, in brackets, over the four objects (chance $0.25$); nuis.\ mask: mask
mass on nuisance slots; nuis.\ effect: mean RMS of the predicted effect per
nuisance slot. All rows are evaluated with one history slot masked, as in \Mask\
training; mask temperature is $1.0$ in the sparse-mask rows and $0.7$ in the last
(App.~\ref{app:synthetic}). Best in bold, second best underlined.}
\label{tab:synth}
\setlength{\tabcolsep}{3.5pt}
\resizebox{\linewidth}{!}{%
\begin{tabular}{@{}lcccccc@{}}
\toprule
& Top-1 all [obj.]\,$\uparrow$ & Target $F_1$\,$\uparrow$ & Nuis.\ mask\,$\downarrow$ & Nuis.\ effect\,$\downarrow$ & Effect MSE\,$\downarrow$ & Pred.\ MSE\,$\downarrow$ \\
\midrule
\Mask\ (global action)            & --- & --- & --- & $\underline{6.3\times10^{-3}}$ & $1.5\times10^{-3}$ & $\underline{3.14\times10^{-3}}$ \\
\Mask\ $+\,\Ld$                    & --- & --- & --- & $\mathbf{2.9\times10^{-3}}$ & $\mathbf{3.5\times10^{-4}}$ & $\mathbf{2.67\times10^{-3}}$ \\
Sparse mask                       & $0.000$ [$0.243$] & $0.000$ & $1.000$ & $2.5\times10^{-2}$ & $1.1\times10^{-2}$ & $8.70\times10^{-3}$ \\
Sparse mask $+\,\Ld$              & $0.000$ [$0.234$] & $0.000$ & $1.000$ & $1.6\times10^{-2}$ & $1.0\times10^{-2}$ & $8.55\times10^{-3}$ \\
Sparse mask $+\,\Ld+\Lsup$        & $\mathbf{0.9995}$ [$0.9995$] & $\mathbf{0.9998}$ & $\mathbf{8.2\times10^{-5}}$ & $9.1\times10^{-3}$ & $\underline{9.3\times10^{-4}}$ & $3.43\times10^{-3}$ \\
\bottomrule
\end{tabular}}
\end{table}

\paragraph{Result: the direct target.}
A sparse mask selects exactly one slot, as the sparsity term asks, but in every
test case it selects a nuisance (top-1 $0.000$ over all slots; $0.243$ if only
objects are ranked; Table~\ref{tab:synth})\pid{hard}. Adding $\Ld$ does not help:
the propagation step can move the effect wherever it is needed, so any entry slot
satisfies the effect loss. Adding $\Lsup$, together with a lower mask temperature
(we did not run a temperature-matched control), gives $0.9995$ with almost no mask
mass on nuisances. Sparsity sets how many slots the action enters; here the
immediate paired response set which one. Prediction and effect error do not reveal this
(\Mask$+\Ld$ has the lowest of both), and the support-trained model still predicts
effects on nuisances ($9.1\times10^{-3}$) because nothing constrains how its
effect spreads. An easier SCM shows the same pattern over five seeds
(App.~\ref{app:synthetic})\pid{W1-2}.

\paragraph{Result: the propagation graph.}
Gates trained without $\Ledge$ are open almost everywhere (edge AUROC $0.624$
against the test pairs' onset labels; Table~\ref{tab:synth-edge})\pid{hard-edge}.
With $\Ledge$ it is $0.975$ on all three seeds, and thresholding the mean test gate at $0.5$ recovers exactly
the four ring edges on each seed (mean gate $0.95$ on ring edges, $0.002$--$0.005$ on
other pairs), and predicted nuisance effects fall to $1.9\times10^{-5}$, at the
cost of a less accurate direct effect (MSE $0.0137$ vs.\ $0.0033$). In
state-based Push-T the gate is state-dependent: edge AUROC is $0.9944\pm0.0002$
against $0.500$ without gates, and the predicted action effect on nuisances falls
from $1.9\times10^{-3}$ (\Obs) to $2.1\times10^{-7}$ while prediction MSE differs
by at most $2\%$ across models (Table~\ref{tab:pusht-state})\pid{P1}.

\paragraph{Result: time resolution.}\label{sec:temporal}
Onset labels assume that causal events are separated in time. In a second SCM (six
objects on a ring), $\kappa\in\{1,2,3,4\}$ of four propagation stages fall inside
one observed step (App.~\ref{app:temporal})\pid{P13}. As $\kappa$ grows, one-hop
edge $F_1$ collapses ($0.999\to0.549\to0.502\to0.337$), while direct-target $F_1$
stays at $0.997$ and prediction MSE at $0.0103$--$0.0105$. The onset labels
themselves, scored as one-hop edges, fall from $F_1=1.00$ to $0.25$: with two
transitions between observations, a chain $A\to B\to C$ and a fork
$\{A\to B, A\to C\}$ produce the same data. Scored against reachability within
the observation interval, the same gates reach $F_1=0.999/0.935/0.992/0.894$\pid{P13.1}, and one gate head per time scale, tied by a reachability constraint,
represents both levels (one-hop $F_1=0.977$; coarse $0.909$--$0.977$)\pid{P13.2}.

\paragraph{Takeaway.}
In our synthetic systems, with object-aligned variables and resolvable onsets, the
immediate paired response identified the direct target and onset order recovered
the ring graph; sparsity and the effect loss did not, and with coarser observation
the recoverable relation became reachability, which prediction error does not
reveal. On anonymous frozen slots of pixel Push-T, which have no ground-truth
graph, a single-seed IID diagnostic gives onset-trained gates that reproduce the
test onset labels (AUROC $1.000$ vs.\ $0.823$ without $\Ledge$;
App.~\ref{app:negatives})\pid{P23.1, P23.2}.

\subsection{Q3: What must not change?}\label{sec:q3}

The action-side losses do not stop a nuisance from changing the prediction. We
test whether context twins remove a shortcut through a nuisance that is
correlated with the outcome in training.

\paragraph{With given variables.}
In state-based Push-T a nuisance vector is correlated with contact geometry
($\rho=0.95$) in training and reversed at test, and the physical inputs are
corrupted with sensor noise\pid{P2}. \Obs\ degrades by $2.9\times$ in prediction
MSE and the gated model by $3.1\times$, although the latter's effect error changes
only by $1.02\times$. We chose $\lambda_{\mathrm{ctx}}=0.25$ on seed 7's IID--OOD
trade-off curve, so that seed's OOD split informed the choice\pid{P4}. On the two other seeds, $\Lctx$ lowers OOD error by
$43.9\%$ and $55.2\%$ and closes $71.9\%$ and $85.4\%$ of the IID-to-OOD gap, at
$4.2\%$ and $3.5\%$ higher IID error, with edge AUROC unchanged
(App.~\ref{app:pusht-results}).

\paragraph{Boundary condition: learned pixel latents.}
On learned latents the same idea needs care. Applied as a generic, isotropic
penalty on latent effects, $\Lctx$ lowers the latent context shift of predicted
effects ($20\%$ on frozen VideoSAUR slots, $46$--$47\%$ end to end in CausalWorld)
but costs factual and effect accuracy, and on frozen slots the shift decoded to
pixels does not fall\pid{P12.5, P20.12}. Diagnostics trace this to a mismatch
between latent and physical distance: two latent directions carry about $99.8\%$
of the decoded physical error, and a predictor-side variant of the penalty leaves
$10$--$19\%$ more residual energy exactly there\pid{P20.17}
(App.~\ref{app:cw-results}). Pairing alone already lowers the latent context shift
by about $66\%$ without any context term (Sec.~\ref{sec:q4}).

\paragraph{Takeaway.}
With given variables, nuisance interventions remove most of the shortcut at a
small IID cost. On learned latents, invariance has to be measured in the
directions that carry the physics; a generic latent penalty is not enough.

\subsection{Q4: Does the effect loss help from pixels?}\label{sec:q4}

On C-JEPA's own frozen VideoSAUR slots of pixel Push-T, \dojepa\ beats C-JEPA-style
masking on all six metrics and all three seeds: factual error is $30\%$ lower,
responsive effect error $12\%$ lower, effect cosine $0.14$ higher, response AUROC
$0.21$ higher (masking is at chance) and the context shift of predicted effects
$31\%$ lower in pixels (App.~\ref{app:vision})\pid{P12.5}. We test the effect loss
itself where the representation is learned, end to end, against \Aug, which sees
exactly the same branches; this control matters because \Paired\ receives both
branches.

\paragraph{LeWM on pixel Push-T.}
We fine-tune the released LeWM checkpoint \citep{maes2026lewm} for $10$ epochs on
$10{,}000$ exact same-state pairs (a $5$-step action macro vs.\ a zero macro; half
responsive), with and without $\Ld$ (weight $0.5$). The preregistered rule needs
$2$ of $3$ fresh seeds to lower effect RMSE by $5\%$ or raise effect cosine by
$0.05$\pid{P22}. All three pass (Fig.~\ref{fig:paired}): effect RMSE falls by
$28.4\%$ ($0.213\to0.153$; per seed $27.5$--$29.0\%$), cosine over all pairs rises
by $0.066$, and factual MSE also falls, by $39.7\%$. All are latent metrics, each
in its own model's latent space. On a fresh holdout of $5{,}000$ pairs
with actions drawn independently from the expert action distribution and no
outcome filtering ($6.5\%$ responsive), effect RMSE is $11.2\%$ lower ($3/3$
seeds) and factual MSE $17.5\%$ lower, mostly on non-responsive
pairs\pid{P22B}. A physical read-out, a linear map from latent effects to block
displacement fitted per model, confirms the gain on the training distribution
(responsive physical effect error $7\%$ lower on every seed) but not on the
holdout's responsive pairs ($3\%$ higher; App.~\ref{app:lewm-seeds})\pid{P25.0}. The gain follows the training
intervention distribution. Trained with the same recipe on natural expert action
sequences, \Paired\ lowers the responsive physical effect error by $13.5\%$ on an
episode-disjoint natural test set and by $5.5\%$ on the outcome-unfiltered
holdout, on every seed\pid{P25.3}.

\begin{figure}[t]
\centering
\includegraphics[width=\linewidth]{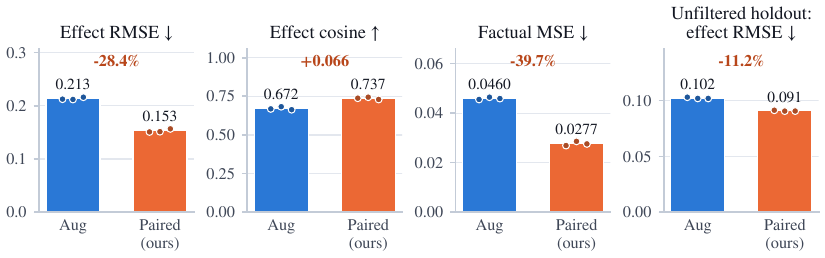}
\caption{\textbf{Effect loss on LeWM, pixel Push-T.} \Aug\ and \Paired\ see
the same paired images; \Paired\ adds $\Ld$. Latent metrics; means over three
fresh seeds (dots: seeds); red: relative change (absolute for the cosine, taken
over all pairs). Panels 1--3: held-out pairs from the training distribution (half
responsive); panel 4: outcome-unfiltered holdout ($6.5\%$ responsive).}
\label{fig:paired}
\end{figure}

\paragraph{CausalWorld: three independently generated benchmarks.}
In the CausalWorld pushing task \citep{ahmed2021causalworld} a robot finger pushes
a block; the branches differ only in the first 9-D joint command, which the
reference mirrors about the current joint position. OOD splits reverse a visual
shortcut (\emph{visual}), change mass and friction while making the context
independent of the outcome (\emph{mechanism}), or change the physics and reverse
the shortcut (\emph{composed}). A first version of the benchmark
leaked the label through how samples were drawn (sampling-only AUROC $0.834$). We
rebuilt it with outcome-defined labels and exact covariate balance (AUROC $0.50$;
App.~\ref{app:causalworld})\pid{P20.3--P20.5}, regenerated it from three dataset
seeds, and fixed in advance the expected signature of \Paired\ against \Aug:
lower responsive effect error and context shift and higher cosine (benefit),
higher factual error and lower response AUROC (cost)\pid{P21}. Checkpoints were
locked on IID data before the OOD splits were opened once\pid{P21B}.

\begin{table}[t]
\centering\small
\caption{\textbf{CausalWorld: \Paired\ vs.\ \Aug\ on three independently
generated benchmarks} (dataset seeds 17/27/37, model seed 107, $40$k updates),
locked OOD splits. Mean relative changes; brackets count instances that move in
the stated direction; ``Signature'' counts instances with the full preregistered
benefit and cost. $^{*}$The mechanism split also decorrelates the visual context
(App.~\ref{app:causalworld}).}
\label{tab:cw}
\setlength{\tabcolsep}{3.5pt}
\resizebox{\linewidth}{!}{%
\begin{tabular}{@{}lcccccc@{}}
\toprule
& \multicolumn{3}{c}{\textbf{Benefit}} & \multicolumn{2}{c}{\textbf{Cost}} & \\
\cmidrule(lr){2-4}\cmidrule(lr){5-6}
\textbf{OOD split} & Resp.\ effect RMSE & Effect cosine & Latent ctx.\ shift & Factual RMSE & Response AUROC & \textbf{Signature} \\
\midrule
Visual    & $3.9\%$ lower [2/3]  & $+0.018$ [3/3] & $66.2\%$ lower [3/3] & $52.8\%$ higher [3/3] & $-0.054$ [3/3] & $2/3$ \\
Mechanism$^{*}$ & $20.3\%$ lower [3/3] & $+0.030$ [2/3] & $65.4\%$ lower [3/3] & $51.5\%$ higher [3/3] & $-0.070$ [3/3] & $2/3$ \\
Composed  & $19.5\%$ lower [3/3] & $+0.020$ [2/3] & $66.2\%$ lower [3/3] & $51.1\%$ higher [3/3] & $-0.073$ [3/3] & $2/3$ \\
\bottomrule
\end{tabular}}
\end{table}

\paragraph{Result.}
The preregistered signature holds on all three OOD splits
(Table~\ref{tab:cw})\pid{P21B}. \Paired\ lowers the context shift of predicted
effects by about $66\%$ without any context loss, and lowers responsive effect
error by about $20\%$ under the mechanism and composed shifts, at the cost of
$51$--$53\%$ higher factual error and lower response AUROC. Three fresh model
seeds on one benchmark instance show the same benefit and cost. In these runs the factual
cost ($15$--$18\%$) exceeds the preregistered $10\%$ tolerance. 


\paragraph{Where the factual cost comes from.}
In Push-T the effect loss fine-tunes a pretrained model and in CausalWorld it shapes
the representation from scratch. To separate the two, we fine-tune each converged
\Aug\ model for $2{,}000$ updates with and without $\Ld$ (the Push-T regime) and
compare the last checkpoints\pid{P25.1}. The factual cost disappears: factual
error changes by $-4$ to $+2\%$ and response AUROC is unchanged on every instance,
while responsive effect error falls by $8\%$ IID and $13\%$ under the physics
shifts and the effect cosine rises on every instance and split
(Table~\ref{tab:cw-regimes}). From scratch, the effect weight sets the trade-off:
$\alpha=0.25$ cuts the factual cost from $52\%$ to $10\%$ and the AUROC loss from
$0.066$ to $0.016$, and keeps the IID effect gain and about two thirds of the
gain under physics shifts\pid{P25.2}.

\begin{table}[t]
\centering\small
\caption{\textbf{When does the effect loss cost factual accuracy?} CausalWorld,
\Paired\ against the \Aug\ model of the same regime, mean relative change over the
three benchmark instances. OOD: mean over the three OOD splits; physics: mechanism
and composed splits. Last column: instances whose factual error rises by at most
$10\%$ on every split. The last two rows are post-hoc diagnostics
(App.~\ref{app:cw-results}).}
\label{tab:cw-regimes}
\setlength{\tabcolsep}{3pt}
\resizebox{\linewidth}{!}{%
\begin{tabular}{@{}lccccccc@{}}
\toprule
& \multicolumn{2}{c}{Factual RMSE} & \multicolumn{2}{c}{Resp.\ effect RMSE} & Response & Latent ctx. & Factual \\
\cmidrule(lr){2-3}\cmidrule(lr){4-5}
Regime & IID & OOD & IID & Physics & AUROC & shift & within $10\%$ \\
\midrule
From scratch, $\alpha=1$ & $+51.9\%$ & $+51.8\%$ & $-9.0\%$ & $-19.9\%$ & $-0.066$ & $-66.0\%$ & $0/3$ \\
From scratch, $\alpha=0.25$ & $+10.3\%$ & $+10.6\%$ & $-9.1\%$ & $-12.8\%$ & $-0.016$ & $-33.6\%$ & $1/3$ \\
Fine-tuned from \Aug, $\alpha=1$ & $-0.6\%$ & $-0.9\%$ & $-8.2\%$ & $-13.3\%$ & $+0.008$ & $-26.2\%$ & $3/3$ \\
\bottomrule
\end{tabular}}
\end{table}

\paragraph{Takeaway.}
In every comparison on end-to-end LeWM-class models, the effect loss lowered mean
effect error relative to an information-matched control\pid{P18.1, P20.7, P20.12, P21B, P22}.
Its effect on factual accuracy depends on the training regime: fine-tuning an
existing model with it keeps factual accuracy (CausalWorld) or improves it
(Push-T), while training from scratch trades factual accuracy and response AUROC
for a larger effect gain, in proportion to the effect weight. 

\section{Discussion and Limitations}\label{sec:discussion}

We show direct-target detection and responder prediction with object-aligned
variables (Q1--Q2), nuisance robustness with given variables (Q3), and better
effect prediction from pixels than an information-matched control, including
lower responsive effect error under CausalWorld physics shifts (Q4). Paired
supervision can also be added to a pretrained LeWM planner through a centered
adapter, with success equivalent to the base planner within $\pm3$ percentage
points (App.~\ref{app:planning}). 
Our supervision signals capture how interventions propagate in observed rollouts, under the training intervention distribution and at the observation rate. Learning them does not require identification of the underlying mechanism, and we do not claim it. Here $do(\cdot)$ names interventions and we use no do-calculus. Paired effects are counterfactual in the sense that both branches share state and noise in a simulator.

\paragraph{Limitations.}
\dojepa\ rests on several assumptions. First, exact pairs require a simulator whose
full state can be saved and restored, and each pair costs two rollouts (a context
twin changes only the observation, not the rollout). Real trials share neither
noise nor exact state, so differences between them are noisy and may be biased.
Second, the intervention and its reference must be known, although the direct
target need not be. Third, support and onset labels require object-aligned slots,
a known slot count, a single direct target per action (the support head is a
softmax) and a frame rate at which response onsets separate in time. Finally,
context twins assume that the context can change without changing the physics,
and the nuisances in our benchmarks are rendered or synthetic. Relaxing these
assumptions, for example with approximate real-world pairs or anonymous object
slots, is a promising direction for future work.


\section{Conclusion}\label{sec:conclusion}
In this work, we proposed \dojepa, a framework for training latent world models
with interventions on the world rather than on what the model sees. From one saved
simulator state, \dojepa\ contrasts an action with a reference action and learns
what the action changed, where it entered, where its effect travelled and what
must stay invariant. With object-aligned variables in synthetic systems, the
immediate paired response identifies where an action enters, and onset order
identifies where its effect travels, down to the time resolution at which events
separate. From pixels, the paired effect loss improves effect prediction over a
control trained on the same data in two environments, and fine-tuning with it
preserved factual accuracy in our experiments. Context twins provide nuisance
invariance with given variables, whereas on learned latents invariance must be
measured in the directions that carry the physics. Finally, prediction error
revealed little about these properties, which suggests that interventional metrics
should become a routine part of world-model evaluation.

\newpage




\IfFileExists{iclr2027_conference.bst}{\bibliographystyle{iclr2027_conference}}%
                                      {\bibliographystyle{plainnat}}
\bibliography{refs}

\appendix

\newcommand{\field}[1]{\textit{#1:}~}

\clearpage
\section*{Contents of the Appendix}
\vspace{-2pt}
{\small
\vspace{3pt}\noindent\hyperref[app:notation]{\textbf{\ref*{app:notation}\quad Notation and Terminology}}\dotfill\textbf{\pageref*{app:notation}}\par
\vspace{3pt}\noindent\hyperref[app:metrics]{\textbf{\ref*{app:metrics}\quad Metrics Guide}}\dotfill\textbf{\pageref*{app:metrics}}\par
\vspace{3pt}\noindent\hyperref[app:benchmarks]{\textbf{\ref*{app:benchmarks}\quad Benchmarks}}\dotfill\textbf{\pageref*{app:benchmarks}}\par
\noindent\hspace*{1.6em}\hyperref[app:scm]{\ref*{app:scm}\quad Synthetic SCMs}\dotfill\pageref*{app:scm}\par
\noindent\hspace*{1.6em}\hyperref[app:pusht-state]{\ref*{app:pusht-state}\quad State-based Push-T}\dotfill\pageref*{app:pusht-state}\par
\noindent\hspace*{1.6em}\hyperref[app:vision-bench]{\ref*{app:vision-bench}\quad Vision Push-T}\dotfill\pageref*{app:vision-bench}\par
\noindent\hspace*{1.6em}\hyperref[app:corpora]{\ref*{app:corpora}\quad LeWM Push-T pairs and planning corpora}\dotfill\pageref*{app:corpora}\par
\noindent\hspace*{1.6em}\hyperref[app:causalworld]{\ref*{app:causalworld}\quad CausalWorld pushing}\dotfill\pageref*{app:causalworld}\par
\vspace{3pt}\noindent\hyperref[app:training]{\textbf{\ref*{app:training}\quad Training and Evaluation Details}}\dotfill\textbf{\pageref*{app:training}}\par
\vspace{3pt}\noindent\hyperref[app:results]{\textbf{\ref*{app:results}\quad Additional Results}}\dotfill\textbf{\pageref*{app:results}}\par
\noindent\hspace*{1.6em}\hyperref[app:synthetic]{\ref*{app:synthetic}\quad Synthetic SCMs}\dotfill\pageref*{app:synthetic}\par
\noindent\hspace*{1.6em}\hyperref[app:temporal]{\ref*{app:temporal}\quad Temporal resolvability}\dotfill\pageref*{app:temporal}\par
\noindent\hspace*{1.6em}\hyperref[app:pusht-results]{\ref*{app:pusht-results}\quad State-based Push-T}\dotfill\pageref*{app:pusht-results}\par
\noindent\hspace*{1.6em}\hyperref[app:vision]{\ref*{app:vision}\quad Vision Push-T}\dotfill\pageref*{app:vision}\par
\noindent\hspace*{1.6em}\hyperref[app:lewm-seeds]{\ref*{app:lewm-seeds}\quad LeWM Push-T: per-seed results}\dotfill\pageref*{app:lewm-seeds}\par
\noindent\hspace*{1.6em}\hyperref[app:cw-results]{\ref*{app:cw-results}\quad CausalWorld}\dotfill\pageref*{app:cw-results}\par
\vspace{3pt}\noindent\hyperref[app:planning]{\textbf{\ref*{app:planning}\quad Planning Study Details}}\dotfill\textbf{\pageref*{app:planning}}\par
\vspace{3pt}\noindent\hyperref[app:negatives]{\textbf{\ref*{app:negatives}\quad Boundary Conditions and Diagnostic Findings}}\dotfill\textbf{\pageref*{app:negatives}}\par
}
\vspace{6pt}

\begin{table}[h]
\centering\small
\caption{\textbf{Main results at a glance.} Each row gives the strongest evidence for
one finding and where it is reported. Seeds are model seeds; ``instances'' are
independently generated benchmarks.}
\label{tab:glance}
\setlength{\tabcolsep}{4pt}
\renewcommand{\arraystretch}{1.12}
\begin{tabularx}{\linewidth}{@{}>{\raggedright\arraybackslash}p{0.27\linewidth}>{\raggedright\arraybackslash}X>{\raggedright\arraybackslash}p{0.13\linewidth}@{}}
\toprule
\textbf{Finding} & \textbf{Evidence} & \textbf{Where} \\
\midrule
Support supervision finds where the action enters & direct target found in $99.95\%$ of test cases; a sparse mask picks a nuisance slot in every case & Table~\ref{tab:synth} \\
Onset supervision recovers where the effect travels & edge AUROC $0.975$ vs.\ $0.624$; the four ring edges recovered exactly on $3/3$ seeds & Table~\ref{tab:synth-edge} \\
Gates confine effects to the right objects & state Push-T edge AUROC $0.994$ vs.\ $0.500$; predicted effect on nuisances $1.9\times10^{-3}\to2.1\times10^{-7}$ & Table~\ref{tab:pusht-state} \\
Context twins remove a nuisance shortcut & on the two seeds not used to pick the weight: OOD error $44$--$55\%$ lower, $72$--$85\%$ of the IID-to-OOD gap closed, IID cost $\le4.2\%$ & App.~\ref{app:pusht-results} \\
Intervention beats masking on C-JEPA's own slots & all six metrics, $3/3$ seeds: factual error $30\%$ lower, AUROC $+0.21$, pixel context shift $31\%$ lower & Table~\ref{tab:vision} \\
Effect loss helps from pixels (LeWM) & latent effect error $28.4\%$ lower ($3/3$); physical effect error $13.5\%$ lower when trained and tested on natural sequences ($3/3$) & Fig.~\ref{fig:paired}, Table~\ref{tab:pusht-dist} \\
Effect loss helps under physics shifts (CausalWorld) & responsive effect error ${\approx}20\%$ lower and latent context shift $66\%$ lower, on three benchmark instances & Table~\ref{tab:cw} \\
Fine-tuning keeps factual accuracy & factual error $-4$ to $+2\%$, effect error $8$--$13\%$ lower, on $3/3$ instances & Tables~\ref{tab:cw-regimes},~\ref{tab:cw-ft} \\
Compatible with a pretrained planner & success within $\pm3$ points of the base planner ($600$ episodes) & App.~\ref{app:planning} \\
\bottomrule
\end{tabularx}
\end{table}

\section{Notation and Terminology}\label{app:notation}

Table~\ref{tab:notation} lists every symbol used in the paper. The column
``Role'' says whether a quantity is \emph{observed} (produced by the simulator or
the encoder), \emph{predicted} (produced by the predictor), a \emph{target}
(used inside a loss), a \emph{label} (a target derived from paired rollouts), or
a \emph{parameter}. The column ``When'' says whether it exists at training time,
at test time, or only for evaluation.

\begin{table}[p]
\centering\small
\caption{\textbf{Notation.} $K$: number of slots; $d$: slot dimension; $H$:
history length; $k$: rollout horizon; $N$: batch size.}
\label{tab:notation}
\setlength{\tabcolsep}{3pt}
\footnotesize\renewcommand{\arraystretch}{0.96}
\begin{tabularx}{\linewidth}{@{}l>{\raggedright\arraybackslash}Xlll@{}}
\toprule
\textbf{Symbol} & \textbf{Meaning} & \textbf{Shape} & \textbf{Role} & \textbf{When} \\
\midrule
$s_t$ & physical simulator state & env.-specific & observed & build/eval \\
$a,\ \anull$ & action; reference action (a physical no-op unless stated) & $\mathcal A$ & input & train, test \\
$c,\ c'$ & context (nuisance) value and its twin & env.-specific & input & train (twin), test ($c$) \\
$u$ & confounder: affects action choice and outcome & scalar & hidden & build \\
$\varepsilon_t$ & exogenous simulator noise, shared by both branches & --- & hidden & build \\
$o_t$ & observation $g(s_t,c)$ (image or state vector) & env.-specific & observed & train, test \\
$z_t=(z^1_t,\dots,z^K_t)$ & encoder output, one row per slot & $K\times d$ & observed & train, test \\
$z^{a}_{t+\ell},\ z^{\anull}_{t+\ell}$ & encodings of the factual and reference branches & $K\times d$ & target & train \\
$\hat z^{\,a}_{t+\ell},\ \hat z^{\,\anull}_{t+\ell}$ & predictions for the two branches & $K\times d$ & predicted & train, test \\
$\dz$ & true paired effect, Eq.~\ref{eq:delta} & $K\times d$ & target & train \\
$\dzh$ & predicted paired effect, Eq.~\ref{eq:delta} & $K\times d$ & predicted & train, test \\
$\Delta p,\ \Delta\hat p$ & true and decoded physical effect of the manipulated object & $2$ or $3$ & observed / predicted & eval \\
$m$ & action-entry mask, Eq.~\ref{eq:mask} & $K$ & predicted & train, test \\
$r$ & direct (then propagated) residual & $K\times d$ & predicted & train, test \\
$G_{ji}$ & gate on the message from slot $i$ to slot $j$, Eq.~\ref{eq:prop} & $K\times K$ & predicted & train, test \\
$A$ & attention weights between slots & $K\times K$ & predicted & train, test \\
$y^{\mathrm{sup}}$ & normalised immediate response energy (support label) & $K$ & label & train \\
$y^{\mathrm{edge}}_{ji}$ & onset-order propagation label & $K\times K$ & label & train \\
$R$ & response set: slots that respond within $k$ steps & set & label & train \\
$T$ & true direct target & set & observed & eval only \\
$E_\psi,\ F_\theta$ & encoder and predictor & --- & parameter & --- \\
$h_\phi,\ u_\phi,\ q_\phi$ & mask scorer, direct-effect network, gate scorer & --- & parameter & --- \\
$\Fb,\ \Fa,\ g_\phi$ & frozen planner predictor, adapted predictor, adapter & --- & parameter & --- \\
$\kappa$ & number of causal stages hidden in one observed step & integer & design & build \\
$\lambda_{(\cdot)},\ \beta$ & loss weights ($\beta$ = context weight in CausalWorld) & scalar & hyperparameter & train \\
\bottomrule
\end{tabularx}
\end{table}

\paragraph{Terminology.} We use each term in one sense only.
\begin{description}\itemsep1pt
\item[Intervention, $do(a)$.] Setting the action to a chosen value in a
      simulator from a saved state. We use the notation to name interventions,
      not to invoke do-calculus.
\item[Reference action $\anull$.] The action of the second branch. It is a
      physical no-op in Push-T and the synthetic SCMs, a mirrored joint command
      in CausalWorld, and a different real expert action in the reference-contrast
      test (App.~\ref{app:planning}).
\item[Paired trajectory.] Two rollouts from one saved state with shared context
      and noise that differ only in the intervention. \emph{Counterfactual} refers
      only to this construction.
\item[Direct intervention, intervention target, direct target.] The variables an
      intervention changes at once. \emph{Intervention support} (direct
      support) is its estimate from the first step of a pair.
\item[Response, responsive component.] A variable \emph{responds} if its paired
      effect exceeds a threshold within the horizon; the \emph{response set}
      contains all such variables. A \emph{responsive pair} is a pair in which the
      manipulated object moves by more than a threshold: $2$\,px in vision
      Push-T, $2$\,px or $0.02$\,rad in the LeWM Push-T corpora, and $2$\,mm in
      CausalWorld, where pairs between $0.5$ and $2$\,mm are discarded
      (App.~\ref{app:benchmarks}).
\item[Downstream (propagated) effect.] A response of a variable that is not a
      direct target.
\item[Temporal support, onset.] The step at which a variable starts to respond.
      \emph{Causal propagation} is the set of ordered pairs $(i,j)$ such that $j$
      starts to respond one step after $i$.
\item[Invariant variable, invariance.] A variable outside the response set of a
      pair; the model should predict zero effect on it.
\item[Context, nuisance context.] A variable that changes the observation but not
      the physics. \emph{Context invariance} means that predicted effects do not
      change when only the context changes.
\item[Mechanism, mechanism shift.] The physical parameters of the dynamics
      (mass, friction, moment of inertia); a mechanism shift changes them at test
      time. A \emph{composed shift} changes both mechanism and context.
\item[OOD.] A test split with a nuisance, mechanism, composed or confounder
      shift; IID means the training distribution.
\end{description}

\paragraph{Targets and information flow.} Table~\ref{tab:targets} lists each loss
with its target and the source of that target, and Table~\ref{tab:info} lists where
each kind of information enters training, testing and evaluation.

\begin{table}[h]
\centering\small
\caption{\textbf{Losses and their targets.} Every target is computed from paired
rollouts or context twins at training time. $e_i=\lVert\dz^{\,i}_{t+1}\rVert^2$
is the immediate response energy of slot $i$; $R$ is the response set;
``onset order'' is defined in Sec.~\ref{sec:pairs}.}
\label{tab:targets}
\setlength{\tabcolsep}{4pt}
\begin{tabular}{@{}llll@{}}
\toprule
\textbf{Loss} & \textbf{Question} & \textbf{Target} & \textbf{Source of the target} \\
\midrule
$\Lpred$ & what happens next? & $z^{a}_{t+1}$, $z^{\anull}_{t+1}$ & observed next frames \\
$\Ld$ & what did the action change? & $\dz=z^{a}-z^{\anull}$ & paired rollouts, Eq.~\ref{eq:delta} \\
$\Lsup$ & where did it enter? & $y^{\mathrm{sup}}_i=e_i/\sum_{i'}e_{i'}$ & first step of the pair \\
$\Ledge$ & where did it travel? & $y^{\mathrm{edge}}_{ji}\in\{0,1\}$ & onset order in the pair \\
$\Linv$ & what must not change? & $\dzh^{\,j}=0$ for $j\notin R$ & response set of the pair \\
$\Lctx$ & what must ignore context? & $\dzh(c)=\dzh(c')$ & context twin \\
\bottomrule
\end{tabular}
\end{table}

\begin{table}[h]
\centering\small
\caption{\textbf{Where information enters.} ``Input'' means the model receives
it; ``target'' means a loss uses it; ``build/eval'' means it is used only to
construct benchmarks or to compute metrics. Test-time model inputs are the
observation history and the action only.}
\label{tab:info}
\setlength{\tabcolsep}{4pt}
\resizebox{\linewidth}{!}{%
\begin{tabular}{@{}lccc@{}}
\toprule
\textbf{Information} & \textbf{Training} & \textbf{Test} & \textbf{Build/eval only} \\
\midrule
Observation history, action $a$ & input & input & --- \\
Reference branch $o^{\anull}$ & input (\Aug), target ($\Ld$) & --- & --- \\
Context twin $o'$ & input (\Aug), target ($\Lctx$) & --- & --- \\
Support, onset and response labels & target (from the pair) & --- & --- \\
Object count and slot alignment & architecture (slot models) & architecture & synthetic metrics \\
Ground-truth graph, object identities & --- & --- & metrics \\
Physical state, SCMs and state Push-T & input, target & input & metrics \\
Physical state, pixel settings (positions, mass, friction) & --- & --- & balancing, probes \\
\bottomrule
\end{tabular}}
\end{table}

\section{Metrics Guide}\label{app:metrics}

For each metric we give its purpose, inputs, definition, range, better
direction, interpretation, where it is used, and caveats. $n$ indexes test
examples; $\mathcal P$ is the set of responsive pairs; $\mathcal N$ is the set of
nuisance slots. Physical quantities (px in Push-T, mm in CausalWorld) are decoded
from latents by a read-out probe trained on training and IID data only.

\paragraph{Relative change and relative reduction.}
For a metric $M$ where lower is better, the \emph{relative reduction} of a method
over a baseline is
\[
\text{relative reduction}=100\cdot\frac{M_{\text{base}}-M_{\text{method}}}{M_{\text{base}}}\ \%,
\]
so a positive value means the method is better. The \emph{relative change} is
$100\,(M_{\text{method}}-M_{\text{base}})/M_{\text{base}}$, the negative of the
relative reduction. In the text we write ``$X\%$ lower'' or ``$X\%$ higher''.
Rates (success, accuracy) are compared in \emph{percentage points} (pp): a change
from $83.0\%$ to $82.3\%$ is $-0.7\pp$, which is a relative change of $-0.8\%$.
Cosine and AUROC differences are absolute differences.

\paragraph{Factual prediction error ($\downarrow$).}
\field{Purpose} how well the model predicts what actually happens.
\field{Inputs} predicted and target latents of the factual branch; for the
physical version, a probe $D$ from latents to physical state $s$.
\field{Definition}
$\mathrm{MSE}_{\text{lat}}=\frac1N\sum_n\lVert\hat z_{t+1,n}-z_{t+1,n}\rVert^2$;
$\mathrm{RMSE}_{\text{phys}}=\big(\frac1N\sum_n\lVert D(\hat z_{t+1,n})-s_{t+1,n}\rVert^2\big)^{1/2}$.
\field{Range} $[0,\infty)$.
\field{Interpretation} the ``cost'' side of the trade-offs in Secs.~\ref{sec:q3}--\ref{sec:q4}.
\field{Used in} all settings (latent MSE in LeWM Push-T; block-position RMSE in
vision Push-T and CausalWorld).
\field{Caveats} latent and physical versions can disagree: a model can have lower
latent MSE and higher physical error (App.~\ref{app:cw-results}). Good prediction
does not imply correct causal structure (Sec.~\ref{sec:q12}).

\paragraph{Effect error: effect MSE and effect RMSE ($\downarrow$).}
\field{Purpose} how accurately the model predicts the difference an action makes.
\field{Inputs} $\dzh$ and $\dz$ (Eq.~\ref{eq:delta}) for the same state.
\field{Definition}
$\mathrm{RMSE}_\Delta=\big(\frac1N\sum_n\lVert\dzh_n-\dz_n\rVert^2\big)^{1/2}$;
effect MSE is its square. \emph{Direct-effect MSE} restricts the sum to the
directly intervened slot. In code, latent MSEs (effect and prediction) also
average over the $Kd$ coordinates: they equal the square of the formula above
divided by $Kd$.
\field{Range} $[0,\infty)$.
\field{Interpretation} the quantity $\Ld$ optimises.
\field{Used in} all settings; the primary metric of the LeWM and planning studies.
\field{Caveats} if few pairs are responsive, the average is dominated by null
effects (Sec.~\ref{sec:q4}). A model with the wrong mechanism can have the lowest
effect error (Table~\ref{tab:synth}).

\paragraph{Responsive effect RMSE ($\downarrow$).}
\field{Purpose} effect accuracy only where something happened.
\field{Inputs} decoded physical effects $\Delta\hat p_n,\Delta p_n$ of the
manipulated object on responsive pairs.
\field{Definition}
$\mathrm{RMSE}_{\mathcal P}=\big(\frac{1}{|\mathcal P|}\sum_{n\in\mathcal P}\lVert\Delta\hat p_n-\Delta p_n\rVert^2\big)^{1/2}$,
$\mathcal P=\{n:\lVert\Delta p_n\rVert>2\}$ (px or mm; App.~\ref{app:benchmarks}
gives the exact rule per benchmark).
\field{Range} $[0,\infty)$, in px or mm.
\field{Interpretation} the main ``benefit'' metric in vision Push-T and CausalWorld.
\field{Caveats} depends on the threshold and on probe accuracy; with few
responsive pairs it rests on a few hundred examples.

\paragraph{Effect cosine ($\uparrow$).}
\field{Purpose} whether the predicted effect points the right way, ignoring size.
\field{Inputs} predicted and true effect vectors on responsive pairs (physical,
or latent where stated).
\field{Definition}
$\overline{\cos}=\frac{1}{|\mathcal P|}\sum_{n\in\mathcal P}\langle\Delta\hat p_n,\Delta p_n\rangle/(\lVert\Delta\hat p_n\rVert\,\lVert\Delta p_n\rVert)$.
\field{Range} $[-1,1]$; $0$ means unrelated directions.
\field{Interpretation} the ``how'' of an intervention.
\field{Used in} vision Push-T, LeWM Push-T (latent; over all pairs unless the
responsive subset is named), CausalWorld.
\field{Caveats} noisy for near-zero effects and blind to magnitude; latent cosine
can stay high while physical cosine drops.

\paragraph{Response AUROC ($\uparrow$).}
\field{Purpose} whether the model knows if an action will reach the object.
\field{Inputs} a score per pair, $\sigma_n=\lVert\Delta\hat p_n\rVert$, and the
physical responsive/non-responsive label.
\field{Definition}
$\mathrm{AUROC}=\Pr(\sigma_i>\sigma_j\mid i\text{ responsive},\,j\text{ non-responsive})$.
\field{Range} $[0,1]$; $0.5$ is chance; below $0.5$ means reversed scores.
\field{Interpretation} detection of responses, not their size or direction.
\field{Used in} vision Push-T, CausalWorld, LeWM (App.~\ref{app:negatives}).
\field{Caveats} can be inflated by how samples were drawn (sampling-only AUROC
$0.83$ before the CausalWorld rebuild). Detecting a response is not predicting it.

\paragraph{Direct-target accuracy and $F_1$ ($\uparrow$).}
\field{Purpose} whether the action is routed into the right slot.
\field{Inputs} mask $m$; true direct target $T$ (evaluation only).
\field{Definition} top-1 accuracy $\frac1N\sum_n\mathbb 1[\arg\max_{i\in\mathcal C} m_{n,i}\in T_n]$
for a candidate set $\mathcal C$: all $K$ slots (\emph{all-slot}) or the object
slots only (\emph{object-only}, the value our evaluators log);
$F_1$ of $\mathbb 1[m_i\ge1/K]$ against $\mathbb 1[i\in T]$.
\field{Range} $[0,1]$; chance top-1 is $1/K$ for all slots ($1/7$ in the hard SCM)
and $1/K_{\text{obj}}$ for objects only ($0.25$).
\field{Used in} synthetic SCMs.
\field{Caveats} object-only top-1 can look like chance when the mask puts almost
all its mass on nuisances (Table~\ref{tab:synth}: $0.243$ object-only, $0.000$
all-slot); read it with the nuisance mask. It saturates on easy benchmarks (every
variant reaches object-only top-1 of $1.0$ in the easy SCM) and needs
object-aligned slots.

\paragraph{Mask leakage: nuisance mask ($\downarrow$).}
\field{Purpose} how much of the action enters slots it should not enter.
\field{Definition} $\bar m_{\mathcal N}=\frac1N\sum_n\sum_{i\in\mathcal N}m_{n,i}$.
\field{Range} $[0,1]$.
\field{Used in} synthetic SCMs.
\field{Caveats} a clean mask does not imply clean effects: the support-trained
model has nuisance mask $8.2\times10^{-5}$ but still predicts nuisance effects.

\paragraph{Nuisance effect ($\downarrow$).}
\field{Purpose} how much the model believes the action moves variables it cannot
move.
\field{Definition} the mean per-slot RMS of the predicted effect on nuisance slots,
$E_{\mathcal N}=\frac1N\sum_n\frac{1}{|\mathcal N|}\sum_{j\in\mathcal N}\big(\frac1d\lVert\dzh^{\,j}_n\rVert^2\big)^{1/2}$.
\field{Range} $[0,\infty)$, in latent units (not squared).
\field{Used in} synthetic SCMs, state Push-T.
\field{Caveats} scale-dependent; zero for a model that predicts no effects at all,
so always read together with effect error.

\paragraph{Edge AUROC and gate means ($\uparrow$ AUROC, $\uparrow$ true gate, $\downarrow$ false gate).}
\field{Purpose} whether gates rank the edges an effect actually used above the
other slot pairs.
\field{Inputs} gates $G_{ji}$ of each test example; its onset labels
$y^{\mathrm{edge}}_{ji}$, computed from the test pair's rollouts as in training.
\field{Definition} AUROC of $\{G_{ji}\}$ against $\{y^{\mathrm{edge}}_{ji}\}$ over
all off-diagonal slot pairs of all test examples (\emph{event-level}); mean gate
over positive (``true'') and negative (``off-path'') pairs. Recovery of the
structural graph is scored separately, by thresholding the mean test gate
(structural $F_1$ below).
\field{Range} $[0,1]$; AUROC $0.5$ is uninformative.
\field{Used in} synthetic SCMs, state Push-T, pixel Push-T diagnostic
(App.~\ref{app:negatives}).
\field{Caveats} a correctly learned edge that a rollout did not use counts as a
negative; the gates see slot states, not the action, so
they cannot represent labels that differ between actions in the same state. In
state Push-T the event-level
edge precision equals the contact rate ($0.724$) and is not reported.

\paragraph{Structural, event-level and schedule-aware $F_1$ ($\uparrow$).}
\field{Purpose} agreement of the thresholded graph with a reference relation.
\field{Inputs} $\hat G=\mathbb 1[G>0.5]$ and a reference: the one-hop structural
graph, the edges active in a rollout, or reachability within the observation
interval of the schedule.
\field{Definition} $F_1(\hat G,\,\text{reference})$ over off-diagonal slot pairs.
\field{Range} $[0,1]$.
\field{Used in} synthetic SCMs (Sec.~\ref{sec:q12}, App.~\ref{app:temporal}).
\field{Caveats} the wrong reference gives misleading numbers: event-level $F_1$
is $0.667$ where structural $F_1$ is $1.0$.

\paragraph{Temporal resolvability ($\uparrow$).}
\field{Purpose} whether the order of causal events is visible in the data.
\field{Inputs} per-slot effect curves $e_i(t)=\lVert\dz^{\,i}_t\rVert$ from paired
rollouts; slots matched over time.
\field{Definition} onset $t_i=\min\{t:e_i(t)\ge\tau\}$; margin between the first
and second onsets; usable fraction of pairs with a unique earliest responder;
temporal-label $F_1$ of the onset labels against one-hop edges.
\field{Range} fractions and $F_1$ in $[0,1]$; margin in steps.
\field{Used in} temporal SCM; pixel preflight (App.~\ref{app:negatives}).
\field{Caveats} equal onsets are unresolved; results depend on $\tau$.

\paragraph{Context shift of predicted effects ($\downarrow$).}
\field{Purpose} how much a change that should not matter alters the predicted effect.
\field{Inputs} predicted effects for the same state and action under $c$ and $c'$.
\field{Definition} latent: $S_{\text{ctx}}=\frac1N\sum_n\lVert\dzh_n(c)-\dzh_n(c')\rVert$;
physical: the RMSE between the two decoded block effects, in px or mm.
\field{Range} $[0,\infty)$.
\field{Used in} vision Push-T, LeWM (App.~\ref{app:negatives}), CausalWorld.
\field{Caveats} latent and physical versions can disagree (Table~\ref{tab:vision});
the true latent effect can itself depend on the context.

\paragraph{OOD/IID ratio and gap reduction ($\downarrow$ ratio, $\uparrow$ gap reduction).}
\field{Purpose} how much a correlation reversal hurts, and how much a method helps.
\field{Definition} $\rho=\varepsilon_{\text{OOD}}/\varepsilon_{\text{IID}}$;
gap reduction $=1-(\varepsilon^{\text{new}}_{\text{OOD}}-\varepsilon^{\text{new}}_{\text{IID}})/(\varepsilon^{\text{old}}_{\text{OOD}}-\varepsilon^{\text{old}}_{\text{IID}})$.
\field{Range} $\rho\ge0$ ($1$ = no degradation).
\field{Used in} state Push-T.
\field{Caveats} $\rho$ also falls if IID error rises, so report both errors.

\paragraph{Physically weighted error ($\downarrow$).}
\field{Purpose} error measured in the latent directions that change physics.
\field{Inputs} a linear physical decoder $W_D$ (latent to physics) fitted with
privileged labels.
\field{Definition} $E_{W}=\lVert W_D(\dzh-\dz)\rVert^2$ versus
$E_{\text{raw}}=\lVert\dzh-\dz\rVert^2$.
\field{Used in} CausalWorld diagnosis (App.~\ref{app:cw-results}).
\field{Caveats} a diagnostic only; it needs privileged labels.

\paragraph{Standardised mean difference, SMD ($|\cdot|\downarrow$).}
\field{Purpose} whether responsive and non-responsive pairs differ before the
intervention.
\field{Definition} $(\bar x_{\text{resp}}-\bar x_{\text{non}})/\sqrt{(s^2_{\text{resp}}+s^2_{\text{non}})/2}$
for a pre-intervention covariate $x$.
\field{Range} real; $|\mathrm{SMD}|\le0.10$ counts as balanced.
\field{Used in} CausalWorld construction (App.~\ref{app:causalworld}).
\field{Caveats} checks means of measured covariates only.

\paragraph{Sampling-only AUROC ($\to0.5$).}
\field{Purpose} whether the label can be guessed from how a sample was drawn.
\field{Inputs} only the variables used to construct the sample (initial geometry,
action, warm-up length), never its outcome.
\field{Definition} AUROC of a classifier trained on these variables to predict the
response label.
\field{Range} $[0,1]$; the ideal is $0.5$; our threshold was $0.60$.
\field{Used in} CausalWorld construction.
\field{Caveats} detects shortcuts only through the variables it is given.

\paragraph{Planning success ($\uparrow$).}
\field{Purpose} whether the model is useful for control.
\field{Definition} $\frac1E\sum_e\mathbb 1[\text{goal reached within the budget}]$
over $E$ fixed start--goal episodes, with the official success criterion.
\field{Range} $[0,100]\%$.
\field{Used in} App.~\ref{app:planning}.
\field{Caveats} dominated by the planner and its cost; with $50$ episodes,
differences of a few points are not significant. Different episode sets give
different reference values ($94\%$, $83.0\%$, $87.8\%$).

\paragraph{Rollout ratio ($\to1$; eligible if $\le1.2$).}
\field{Purpose} how much adaptation changed the base dynamics.
\field{Definition} $r=\varepsilon_{\text{adapted}}/\varepsilon_{\text{base}}$,
the $5$-step open-loop latent error on expert trajectories.
\field{Used in} planning adapters.
\field{Caveats} one-step versions are weak proxies: one-step cosine $0.999$ with the
base model still lost $16\pp$ of planning.


\section{Benchmarks}\label{app:benchmarks}

Table~\ref{tab:benchmarks} summarises the settings; the subsections below describe
each benchmark and Fig.~\ref{fig:nuisances} shows its nuisances.

\begin{table}[h]
\centering
\caption{\textbf{Settings.} Terms are the losses of Eq.~\ref{eq:full} that are
switched on. Seeds are model seeds unless stated.}
\label{tab:benchmarks}
\resizebox{\linewidth}{!}{%
\begin{tabular}{@{}lllllll@{}}
\toprule
\textbf{Setting} & \textbf{Model input} & \textbf{Intervention vs.\ reference} & \textbf{Shift at test} & \textbf{Terms used} & \textbf{Train/val/test} & \textbf{Seeds} \\
\midrule
Synthetic SCM & $7$ slots, $d{=}16$ & impulse at a noisy contact point vs.\ none & reversed action--nuisance corr. & $\Ld,\Lsup,\Ledge,\Linv$ & $5$k/$1$k/$2$k & $1$--$3$ \\
Temporal SCM & $9$ slots, $d{=}18$ & impulse vs.\ none & observation stride $\kappa$ & $\Ld,\Lsup,\Ledge,\Linv$ & $5$k/$1.2$k/$2$k & $3$ \\
State Push-T & state + $3$ nuisance slots & action vs.\ no-op & reversed corr.\ ($\rho{=}0.95$) & $\Ld,\Ledge,\Linv,\Lctx$ & $20$k/$2.5$k/$2.5$k & $3$ \\
Vision Push-T & $4$ frozen slots, $d{=}128$ & action vs.\ no-op & reversed pattern corr.\ ($\rho{=}0.95$) & $\Ld,\Linv,\Lctx$ & $12$k/$2$k/$2.5$k & $3$ \\
LeWM Push-T & $224^2$ RGB, $K{=}1$, $d{=}192$ & $5$-step macro vs.\ zero macro & none; mass, visual (planning) & $\Ld$ & $9$k/$1$k pairs & $3$ \\
CausalWorld & $64^2$ RGB, $K{=}1$, $d{=}192$ & first joint command vs.\ mirrored one & visual, mechanism, composed & $\Ld$ ($+\Lctx$) & $6$k/$1$k/$1.5$k per split & $3$ model, $3$ data \\
\bottomrule
\end{tabular}}
\end{table}

\begin{figure}[h]
\centering
\includegraphics[width=\linewidth]{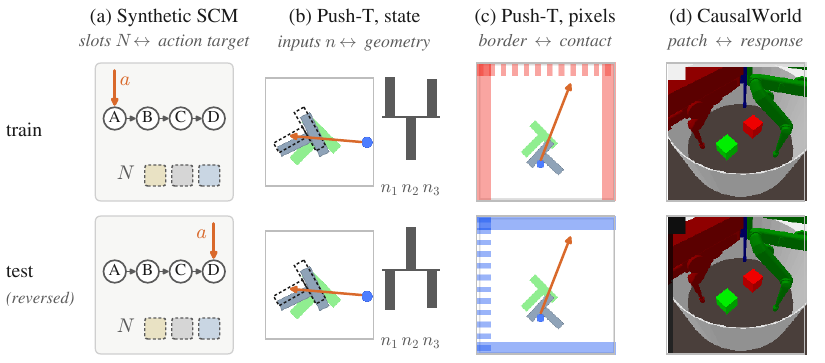}
\caption{\textbf{Nuisances and shifts in each benchmark.} In every benchmark the
nuisance is correlated with the outcome during training and the correlation is
reversed (or, for mechanism shifts, the physics is changed) at test time. The
physics of the two members of a context twin is identical by construction.}
\label{fig:nuisances}
\end{figure}

\subsection{Synthetic SCMs}\label{app:scm}
\paragraph{Hard SCM (Q1, Q2).}
Seven slots of dimension $16$: four objects and three nuisance variables. The
objects sit on a jittered ring, and a directed interaction links object $i$ to
object $i{+}1 \bmod 4$. An action is a $4$-vector: a contact point and a 2-D
impulse. It never names its target. The contact point is the target's position
moved $18\%$ of the way towards the next object on the ring, plus Gaussian noise
(s.d.\ $0.12$); the impulse points roughly towards that next object. The first
step after the action applies the impulse to the target only; the next two steps
propagate velocity along the ring (strength $0.42$, drag $0.90$). A hidden
context sets the nuisance values and biases the behaviour policy, so nuisances
co-vary with the action ($\rho=0.95$) in training; the test split reverses this.
Both branches of a pair share the hidden context and the noise. Splits:
$5{,}000$/$1{,}000$/$2{,}000$ (train/validation/test).
\paragraph{Easy SCM.}
An earlier version with three objects and one nuisance (a ``light'') driven by
the same hidden context as the action policy; the test split reverses the
correlation. It was retired because every \dojepa\ variant reached target
accuracy $1.0$ on it, so it could not tell the ingredients apart.
\paragraph{Temporal SCM (App.~\ref{app:temporal}).}
Six objects on a directed ring and three nuisances (slot dimension $18$), with
four propagation stages after a direct intervention. The edge loss sees only
coarsened rollouts: $\kappa=1$ observes stages $\{0,1,2,3,4\}$, $\kappa=2$
$\{0,2,4\}$, $\kappa=3$ $\{0,3,4\}$ and $\kappa=4$ $\{0,4\}$. The immediate
factual/reference pair is available for every $\kappa$. Splits:
$5{,}000$/$1{,}200$/$2{,}000$.

\subsection{State-based Push-T}\label{app:pusht-state}
We use the \texttt{gym-pusht} simulator \citep{chi2023diffusionpolicy} with a
5-D state (agent position, block position and angle) and three synthetic
nuisance slots. A pair branches the saved state under the factual action and a
no-op; later rollout steps are identical. An early version restored only the
body pose, not the collision and warm-start state, and replays differed by $1.6$
units; we replaced it with a seeded reset plus identical context replay, which
reproduces the branches exactly. \emph{Weak confound}: nuisances correlated with
the action in training and reversed at test. \emph{Strong confound}: nuisances
correlated with contact geometry at $\rho=0.95$, plus input-only sensor
corruption (agent noise $6$\,px, block noise $18$\,px, angle noise $0.25$\,rad,
dropout probability $0.15$). Splits: $20{,}000$/$2{,}500$/$2{,}500$, rollout
horizon $3$. The multi-step dataset uses the strong-confound protocol with
$10$-step action sequences.

\subsection{Vision Push-T}\label{app:vision-bench}
$96\times96$ RGB frames. A nuisance pattern is rendered in the image border; in
training it is correlated with contact and action geometry ($\rho=0.95$,
strength $0.45$) and at test the correlation is reversed. The pattern never
changes the physics, and the encoder saw both appearances during pretraining, so
the test is a correlation shift, not an unseen style. Each sample has three
history frames, a factual future and a no-op future from the exactly restored
state. Frozen slots: C-JEPA's Push-T VideoSAUR/DINOv2 encoder ($4$ slots
$\times128$). Splits: $12{,}000$/$2{,}000$/$2{,}500$. Before training any
predictor, we checked that the slots contain the physics: linear probes decode
agent position to $7.06$\,px, block position to $4.88$\,px and angle to
$0.026$\,rad (trivial baselines $89$\,px, $113$\,px, $1.61$\,rad), and the
nuisance moves the slots $3.16\times$ more than one step of physics. A home-made
reconstruction slot autoencoder, tried first, gave probe errors of $75$\,px and
$83^\circ$ and was discarded.

\subsection{LeWM Push-T pairs and planning corpora}\label{app:corpora}
The LeWM studies use the \texttt{stable-worldmodel} Push-T environment
\citep{swm2026} at $224\times224$. Five primitive controls form one $10$-D model
action; the reference is five exact zero relative controls. Table~\ref{tab:corpora}
lists the corpora. Labels always come from measured outcomes ($\ge2$\,px
translation or $\ge0.02$\,rad rotation of the block), never from the proposal
that generated the action.

\begin{table}[h]
\centering\small
\caption{\textbf{Push-T intervention corpora for the LeWM studies.} All use exact
same-state branching with audited replay. ``Resp.'' is the fraction of responsive
pairs.}
\label{tab:corpora}
\setlength{\tabcolsep}{4pt}
\resizebox{\linewidth}{!}{%
\begin{tabular}{@{}llrrl@{}}
\toprule
\textbf{Corpus} & \textbf{States} & \textbf{$n$} & \textbf{Resp.} & \textbf{Action source} \\
\midrule
Enriched (E) training & episode replay & $10{,}000$ & $50.0\%$ & balanced toward/away proposal \\
Outcome-unfiltered holdout & episode replay & $5{,}000$ & $5.9\%$ & i.i.d.\ from expert action marginal \\
Fresh outcome-unfiltered holdout & episode replay & $5{,}000$ & $6.5\%$ & i.i.d.\ from expert action marginal \\
Expert contiguous & expert prefix & $2{,}500$ & $50.8\%$ & contiguous expert macro \\
Expert shuffled & expert prefix & $2{,}500$ & $50.8\%$ & same macro, order shuffled \\
Expert i.i.d.\ marginal & expert prefix & $2{,}500$ & $36.2\%$ & i.i.d.\ from expert marginal \\
Natural (N) training & expert prefix & $10{,}000$ & ${\approx}51\%$ & contiguous expert macro \\
Reference contrast & expert prefix & $2{,}500$ & $52.7\%$ & factual / other expert macro / zero \\
\bottomrule
\end{tabular}}
\end{table}

The enriched corpus proposes actions towards or away from the block to reach a
$50/50$ responsive split. Because this proposal is direction-biased, the corpus
is stratified, not representative; App.~\ref{app:planning} measures what this
costs. The two ``outcome-unfiltered'' holdouts reject nothing on the basis of
outcome, but each primitive action is drawn independently from the expert action
marginal, which removes temporal structure. We call them outcome-unfiltered
rather than natural for this reason.

\subsection{CausalWorld pushing}\label{app:causalworld}
The CausalWorld pushing task \citep{ahmed2021causalworld}: a three-finger robot
pushes a block. Observations are $64\times64$ RGB frames (upsampled to $224$ for
the encoder) at steps $0,4,8,12$ of a $12$-step window; four primitive $9$-D
joint commands form one $36$-D model action. A pair shares the snapshot, a
warm-up and all commands except the first: the factual and reference first
commands are mirrored about the current joint position. The label is the maximum
difference in block displacement over the window: responsive if $\ge2$\,mm,
non-responsive if $\le0.5$\,mm, and rejected in between. Physical state is
stored for evaluation only; models see RGB and actions. Splits and shifts:
\begin{center}\small
\begin{tabular}{@{}llll@{}}
\toprule
Split & Visual shortcut & Physics (mass, friction) & Context--effect correlation \\
\midrule
Train / IID val & present & Space A & $+0.88$ to $+0.90$ \\
OOD visual & reversed & Space A & $\approx-0.90$ \\
OOD mechanism & independent & Space B (heavier) & $0.0$ \\
OOD composed & reversed & Space B & $\approx-0.90$ \\
\bottomrule
\end{tabular}
\end{center}
Block mass is in $[0.015,0.044]$ in Space A and $[0.046,0.099]$ in Space B
(simulator units). The mechanism split changes the physics and also makes the
visual context independent of the outcome, so it is not a physics-only shift. Splits: $6{,}000$ train, $1{,}000$ IID validation and $1{,}500$
per OOD split. The benchmark has a single manipulated block, so it does not test
direct-support recovery.

\paragraph{The label leak and the rebuild.}
The first version sampled responsive and non-responsive pairs separately: the
collector accepted or rejected candidates depending on the requested class.
Pre-intervention geometry then predicted the label: a classifier on sampling
variables alone reached AUROC $0.834$, and the standardised mean differences of
pre-contact distance were $-1.01$ to $-0.86$\pid{P20.3--P20.4}. Results obtained
on that version\pid{P20-CW--P20.4} are diagnostic only and are not used for any
claim. The rebuilt version samples geometry and action first, runs both branches,
and assigns the label from the outcome. It balances the classes by exact warm-up
length, $1$-mm bins of pre-contact distance and fingertip distances, with a
$12\times$ candidate pool, and it assigns the context only after balancing.
Gates: $|\mathrm{SMD}|\le0.10$ for all five geometry variables and sampling-only
AUROC $\le0.60$. The rebuilt benchmark has sampling-only AUROC $0.500$--$0.501$
and all $|\mathrm{SMD}|$ near $10^{-6}$\pid{P20.5}. The three independent
instances of Table~\ref{tab:cw} were generated with the same protocol from
dataset seeds $17$, $27$ and $37$, and all passed the same gates\pid{P21}.

\FloatBarrier
\section{Training and Evaluation Details}\label{app:training}

Table~\ref{tab:hparams} lists the settings used by the training scripts. All
models use AdamW. 

\begin{table}[h]
\centering\footnotesize
\caption{\textbf{Training settings.} ``Sel.'' is the checkpoint-selection rule;
all selection uses IID validation data. lr: learning rate; wd: weight decay.}
\label{tab:hparams}
\setlength{\tabcolsep}{3pt}
\begin{tabularx}{\linewidth}{@{}l>{\raggedright\arraybackslash}X@{}}
\toprule
\textbf{Setting} & \textbf{Settings} \\
\midrule
Hard SCM &
Transformer predictor (width $64$, depth $2$, $4$ heads), history $3$, $2$
propagation steps ($\gamma=0.55$); $25$ epochs, batch $128$, lr $3\times10^{-4}$,
wd $10^{-4}$; $\lambda_\Delta=5$, $\lambda_{\mathrm{sup}}=2$, mask temperature
$1.0$ (sparse-mask rows without $\Lsup$) or $0.7$ (with $\Lsup$, and all gated
models), entropy regulariser $0.02$ (a cardinality term, also $0.02$, is zero
under the softmax mask), no-op mask penalty $0.05$; every model is trained and
evaluated with one random history slot masked (masked-history loss $0.25$); edge
runs add $\lambda_{\mathrm{inv}}=5$ (threshold $0.05$), $\lambda_{\mathrm{edge}}=5$,
gate invariance $5$, gate $\ell_1$ $0.02$, onset threshold $0.08$, gate
temperature $0.4$. Sel.: lowest IID prediction MSE. Seeds $7$ (ablations),
$7/17/27$ (gated model). \\
Easy SCM & Transformer predictor (width $64$, depth $2$, $4$ heads); $25$ epochs,
batch $128$, lr $3\times10^{-4}$, wd $10^{-4}$; $\lambda_\Delta=5$,
$\lambda_{\mathrm{sup}}=2$, mask temperature $0.7$, entropy regulariser $0.02$;
$5{,}000$/$1{,}000$/$2{,}000$ samples. Sel.: lowest IID prediction MSE. Seeds
$7/17/27/37/47$. \\
Temporal SCM &
Width $64$, depth $2$, $4$ propagation steps; $25$ epochs, batch $128$, lr
$3\times10^{-4}$, wd $10^{-4}$; $\lambda_\Delta=5$, $\lambda_{\mathrm{sup}}=2$,
$\lambda_{\mathrm{inv}}=5$, $\lambda_{\mathrm{edge}}=5$. Sel.: lowest IID
prediction MSE. Seeds $7/17/27$. \\
State Push-T &
Width $96$, depth $3$, $4$ heads; gate temperature $0.7$; $30$ epochs, batch
$256$, lr $3\times10^{-4}$, wd $10^{-4}$; reference-branch weight $0.5$,
$\lambda_\Delta=5$, reward head $1$, $\lambda_{\mathrm{inv}}=5$ (threshold
$0.03$), $\lambda_{\mathrm{edge}}=2$, gate invariance $2$, gate $\ell_1$ $0.01$,
$\lambda_{\mathrm{ctx}}=0.25$ (context twins by shuffling nuisances within a
batch; evaluated by sign flip). Sel.: IID prediction and effect error (context
not used). Seeds $7/17/27$. \\
Vision Push-T &
Frozen slots; transformer predictor (width $256$, depth $6$, $8$ heads); $40$
epochs, batch $512$, lr $3\times10^{-4}$, wd $10^{-5}$; $\lambda_\Delta=4$,
$\lambda_{\mathrm{inv}}=1$ (soft weights, temperature $0.35$), context-twin
prediction weight $0.5$, history loss $1$, $\lambda_{\mathrm{ctx}}=0.5$
(sensitivity run $0.25$).
Sel.: IID validation. Seeds $7/17/27$. \\
LeWM Push-T (\Paired\ vs.\ \Aug) &
Released LeWM checkpoint (ViT-tiny/14 encoder, $192$-D latent, history $3$,
$6$-layer predictor); $10$ epochs, batch $64$, lr $10^{-5}$, wd $10^{-3}$,
SIGReg $0.09$, reference-branch weight $1.0$, $\lambda_\Delta=0.5$, gradient
clipping $1.0$, $10\%$ validation split, expert replay in every update. Seeds $7$
(development), $17/27/37$ (confirmation). Sel.: lowest validation paired-prediction
loss, the same rule for \Aug\ and \Paired\ (no effect or planning metric). \\
Planning adapters &
Rank-$16$ bias-free adapter ($\alpha=16$) on the frozen LeWM predictor; $10$
epochs, batch $64$, lr $10^{-4}$, wd $0$; reference-branch weight $1.0$,
$\lambda_\Delta=0.5$ (\Paired\ only), $5$-step rollout preservation weight $10$.
Sel.: lowest held-out paired prediction MSE among epochs $\ge1$ whose $5$-step
rollout error is at most $1.2\times$ the base model's; effect and planning
metrics are not used. Seeds $7$, $17/27/37$ (enriched), $7/17/27$ (natural). \\
LeWM, full objective &
Official LeWM training recipe (lr $5\times10^{-5}$, wd $10^{-3}$, bf16, SIGReg
$0.09$, frameskip $5$, warm-up cosine schedule), four branches flattened into
one batch of $128$; $\alpha=1$, $\beta=0.5$; update budget matched to standard
LeWM. Seed $7$. \\
CausalWorld &
LeWM-class encoder and predictor, $192$-D latent, identical initialisation for
all conditions; lr $5\times10^{-5}$, wd $10^{-3}$, SIGReg $0.09$, bf16; batch
$128$ ($32$ pairs $\times$ four branches for paired conditions); $40{,}000$
updates with a cosine schedule ($5{,}000$ in the first run, $10$k and $20$k in
the budget study); validation every $250$ updates; $\alpha=1$; $\beta=0.25$ for
\dojepa. Sel.: lowest IID latent factual MSE; a frozen causal-aware selector was
also tested. Model seeds $47/57/67$; model seed $107$ with dataset seeds
$17/27/37$. \\
\bottomrule
\end{tabularx}
\end{table}

\paragraph{Evaluation and aggregation.}
Metrics are computed on held-out splits after checkpoint selection. Seeds are
aggregated by the mean; we report the standard deviation or the per-seed values.
Trainers derive their train/validation split from the run seed, so effect metrics
are always computed within seed; evaluating a model on another seed's split would
expose up to $90\%$ of its training data.

\paragraph{Hardware.}
All experiments ran on NVIDIA A100 40 GB GPUs, one GPU per run. 

\FloatBarrier
\section{Additional Results}\label{app:results}

\subsection{Synthetic SCMs}\label{app:synthetic}

\begin{table}[h]
\centering\small
\caption{\textbf{Easy SCM}, reversed-correlation test split, mean over five seeds.
Every mask-based variant reaches object-only target top-1 of $1.0$. ``Nuis.\
effect'' is the RMS of the predicted effect on the nuisance slot. \Mask\ with $\Ld$
has the lowest effect MSE but $282\times$ more predicted nuisance effect than the
full model. Best in bold, second best underlined.}
\label{tab:easy}
\setlength{\tabcolsep}{4pt}
\begin{tabular}{@{}lcccc@{}}
\toprule
& Pred.\ MSE\,$\downarrow$ & Effect MSE\,$\downarrow$ & Nuis.\ effect\,$\downarrow$ & Nuis.\ mask\,$\downarrow$ \\
\midrule
\Mask\ (global action) & $0.00612$ & $0.00308$ & $0.0875$ & --- \\
\Mask\ $+\,\Ld$ & $0.00280$ & $\mathbf{0.000084}$ & $0.0110$ & --- \\
Sparse mask & $0.00446$ & $0.00258$ & $0.00606$ & $0.0257$ \\
Sparse mask $+\,\Ld$ & $\underline{0.00279}$ & $0.000261$ & $\underline{0.00240}$ & $\underline{0.0112}$ \\
Sparse mask $+\,\Ld+\Lsup$ & $\mathbf{0.00276}$ & $\underline{0.000232}$ & $\mathbf{0.000039}$ & $\mathbf{0.000197}$ \\
\bottomrule
\end{tabular}
\end{table}

\begin{table}[h]
\centering
\caption{\textbf{Where does the effect travel?} Synthetic SCM, reversed-correlation
test split; seed 7 except the last row (mean over seeds 7/17/27). Every row also
uses $\Ld+\Lsup$ and the effect part of $\Linv$. Edge AUROC and the true-edge
gate are scored against the test pairs' onset labels. ``Off-path gate'': mean gate
on slot pairs that are not on the rollout's propagation path (this includes
correctly learned but unused ring edges); ``nuis.\ in-gate'': mean gate into
nuisance slots; ``nuis.\ effect'': as in Table~\ref{tab:synth}.}
\label{tab:synth-edge}
\setlength{\tabcolsep}{3.5pt}
\resizebox{\linewidth}{!}{%
\begin{tabular}{@{}lcccccc@{}}
\toprule
& Edge AUROC\,$\uparrow$ & True-edge gate\,$\uparrow$ & Off-path gate\,$\downarrow$ & Nuis.\ in-gate\,$\downarrow$ & Nuis.\ effect\,$\downarrow$ & Direct-effect MSE\,$\downarrow$ \\
\midrule
Gates without $\Ledge$            & $0.624$ & $0.974$ & $0.465$ & $1.0\times10^{-3}$ & $5.6\times10^{-5}$ & $0.0129$ \\
$+\,\Ledge$ (onset labels)        & $0.975$ & $0.955$ & $0.053$ & $2.7\times10^{-4}$ & $3.9\times10^{-5}$ & $0.0134$ \\
$+$ gate term of $\Linv$          & $0.975$ & $0.947$ & $0.052$ & $1.0\times10^{-4}$ & $2.0\times10^{-5}$ & $0.0135$ \\
same, 3 seeds                     & $0.9751{\pm}0.0002$ & $0.950$ & $0.051$ & $8.9\times10^{-5}$ & $1.9\times10^{-5}$ & $0.0137$ \\
\bottomrule
\end{tabular}}
\end{table}

\paragraph{Evaluation protocol.} Our evaluators log object-only top-1; we
reloaded the saved checkpoints of Table~\ref{tab:synth} and recomputed top-1 over
all slots with the same test split and masking (object-only values are reproduced
exactly). The sparse-mask rows put almost all mask mass on nuisances ($1.000$), so
their all-slot top-1 is $0.000$; the support-trained row and the three gated
models score the same under both definitions ($0.9995$--$1.000$). Every row is
evaluated with one random history slot masked, as in training. C-JEPA itself
predicts from the full history at test time, and the action path of the
mask-based rows reads the unmasked last frame, which \Mask\ does not see. The
sparse-mask rows without $\Lsup$ use mask temperature $1.0$ and the row with
$\Lsup$ uses $0.7$; we did not run a temperature-matched control, so the jump in
Table~\ref{tab:synth} reflects both changes.

\paragraph{Easy SCM and gated models.}
Table~\ref{tab:easy} gives the easy SCM\pid{W1-2}. Each added component removes
about an order of magnitude of predicted nuisance effect; the support loss
removes the last factor of $61$. In the hard SCM, the three seeds of the gated
model give prediction MSE $0.0041$, direct-effect MSE $0.0137$, target $F_1$
$0.9995\pm0.0003$, target mask $0.994$, descendant mask $0.003$ and nuisance
in-gate $8.9\times10^{-5}$\pid{hard-edge}. Structural recovery per seed (mean
test gate thresholded at $0.5$): precision $=$ recall $=1$ on seeds $7$, $17$ and
$27$; mean ring-edge gate $0.947$, $0.951$, $0.953$; mean gate on other pairs
$0.0053$, $0.0025$, $0.0018$; largest non-edge gate $0.059$.

\subsection{Temporal resolvability}\label{app:temporal}

\begin{table}[h]
\centering\small
\caption{\textbf{Temporal SCM}, reversed-correlation test split, mean over seeds
7/17/27. $\kappa$ causal stages are hidden in one observed step. ``Onset-label
$F_1$'' scores the training labels themselves as one-hop edges. The
schedule-aware relation is reachability within the observation interval.}
\label{tab:temporal}
\setlength{\tabcolsep}{5pt}
\begin{tabular}{@{}lcccc@{}}
\toprule
$\kappa$ & $1$ & $2$ & $3$ & $4$ \\
\midrule
Direct-target $F_1$ & $0.997$ & $0.997$ & $0.997$ & $0.997$ \\
Descendant-effect $F_1$ & $0.947$ & $0.889$ & $0.889$ & $0.889$ \\
One-hop structural $F_1$ & $\mathbf{0.999}$ & $0.549$ & $0.502$ & $0.337$ \\
One-hop structural AUROC & $1.000$ & $0.910$ & $0.884$ & $0.843$ \\
Onset-label $F_1$ & $1.00$ & $0.40$ & $0.40$ & $0.25$ \\
Schedule-aware $F_1$ & $0.999$ & $0.935$ & $0.992$ & $0.894$ \\
Schedule-aware AUROC & $1.000$ & $1.000$ & $1.000$ & $0.771$ \\
Prediction MSE & $0.0105$ & $0.0103$ & $0.0105$ & $0.0104$ \\
\bottomrule
\end{tabular}
\end{table}

Table~\ref{tab:temporal} gives the full results\pid{P13, P13.1}. The
implied reachability horizon of the learned graph matches the schedule's
effective horizon $(1,3,3,4)$ on every seed. At $\kappa=4$ the learned graph is
too dense (edge density $1.0$ against $0.8$ for the schedule relation), which
lowers its AUROC. The multi-timescale model has one gate head per schedule;
coarse heads are tied to reachability of the one-hop head by a differentiable
max--min path composition, $G^{(\kappa)}\approx\mathrm{Reach}_{h_\kappa}(G^{(1)})$.
It reaches one-hop $F_1=0.977$ at $\kappa=1$ and schedule $F_1$
$0.955/0.977/0.909$ at $\kappa=2/3/4$ (3 seeds). A single shared gate trained on
all schedules reaches one-hop $F_1\approx0.509$. Tying the heads improves
cross-scale consistency by $+0.083$, $+0.070$ and $+0.022$ $F_1$ over
independent heads, at a small one-hop cost ($0.977$ vs.\ $0.999$)\pid{P13.2}.

\subsection{State-based Push-T}\label{app:pusht-results}

\paragraph{Causal routing and gates.}
Table~\ref{tab:pusht-state} gives the four models of Sec.~\ref{sec:q12}\pid{P1}.

\begin{table}[h]
\centering\small
\caption{\textbf{State-based Push-T}, reversed-correlation split, mean of seeds
7/17/27. ``Agent pos.'' is the agent-position MSE in px$^2$; ``nuis.\ effect'' is
the mean per-slot RMS of the predicted effect on nuisance slots; edge AUROC is
scored against the test pairs' onset labels. The agent is the known direct target
in all rows. Arrows give the better direction. Best in bold, second best underlined.}
\label{tab:pusht-state}
\setlength{\tabcolsep}{4pt}
\resizebox{\linewidth}{!}{%
\begin{tabular}{@{}lccccc@{}}
\toprule
& Pred.\ MSE\,$\downarrow$ & Agent pos.\,$\downarrow$ & Nuis.\ effect\,$\downarrow$ & Edge AUROC\,$\uparrow$ & Nuis.\ in-gate\,$\downarrow$ \\
\midrule
\Obs\ (global action)                       & $2.77\times10^{-4}$ & $5.49$ & $1.9\times10^{-3}$ & --- & --- \\
Global action, $+\Ld+\Linv$                 & $2.78\times10^{-4}$ & $3.16$ & $4.8\times10^{-4}$ & --- & --- \\
Action routed to agent, no gates            & $\underline{2.75\times10^{-4}}$ & $\mathbf{0.93}$ & $\underline{2.2\times10^{-4}}$ & $0.500$ & $1.000$ \\
Routed, $+$ gates, $\Ledge$, gate invariance & $\mathbf{2.72\times10^{-4}}$ & $\underline{0.95}$ & $\mathbf{2.1\times10^{-7}}$ & $\mathbf{0.994}$ & $\mathbf{3.1\times10^{-5}}$ \\
\bottomrule
\end{tabular}}
\end{table}

\begin{figure}[h]
\centering
\includegraphics[width=\linewidth]{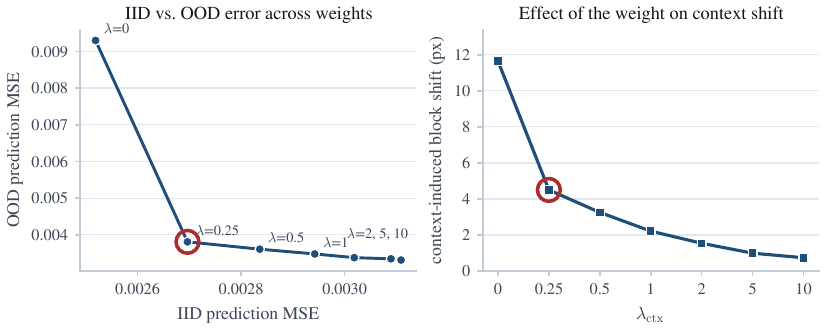}
\caption{\textbf{Context weight in state Push-T}, strong confound. Left: IID and
OOD prediction error against $\lambda_{\mathrm{ctx}}$ (seed 7); the circled point
(both panels) is $\lambda_{\mathrm{ctx}}=0.25$, chosen as the point on the IID--OOD trade-off
curve closest to the ideal, so this seed's OOD split informed the choice. Right:
the context-induced block displacement falls monotonically with
$\lambda_{\mathrm{ctx}}$.}
\label{fig:context}
\end{figure}

\paragraph{Context weight.}
Per seed, $\lambda_{\mathrm{ctx}}=0.25$ against $\lambda_{\mathrm{ctx}}=0$
changes the IID error by $+7.0\%$, $+4.2\%$ and $+3.5\%$, the OOD error by
$-59.0\%$, $-43.9\%$ and $-55.2\%$, and the OOD/IID ratio from $3.69$, $2.72$,
$2.95$ to $1.41$, $1.46$, $1.28$ (seeds $7$, $17$, $27$)\pid{P4}. Edge AUROC is
$0.994$ in every run. The first context run used $\lambda_{\mathrm{ctx}}=5$,
which closed $97\%$ of the gap at a $19\%$ IID cost\pid{P3}.

\subsection{Vision Push-T}\label{app:vision}

\paragraph{Main comparison.}
Vision Push-T models share frozen VideoSAUR/DINOv2 slots
\citep{zadaianchuk2023videosaur,oquab2024dinov2} from C-JEPA's Push-T
encoder\pid{P12.1}; a pattern rendered in the image border is correlated with
contact geometry ($\rho=0.95$) in training and reversed at test. We compare
\Mask, \Aug\ and \dojepa\ ($\Ld+\Linv+\Lctx$; $\lambda_{\mathrm{ctx}}=0.5$ chosen
on IID data\pid{P12.4}; Table~\ref{tab:vision})\pid{P12.5}.
\dojepa\ beats C-JEPA-style masking on every metric and every seed: factual error
$39.0\to27.2$\,px ($30\%$ lower), responsive effect error $20.3\to17.9$\,px ($12\%$
lower), cosine $0.60\to0.74$, response AUROC $0.49\to0.69$ (masking is at chance),
and context shift of predicted effects $18\%$ lower in latent space and $31\%$ lower
in pixels. Against \Aug, which sees the same branches, the result is mixed:
\dojepa\ detects better whether the block will respond (AUROC $+0.025$, $3/3$
seeds), and its predicted effects vary $20\%$ less with the context in latent space
($3/3$); but it has $9.3\%$ higher factual error and $6.5\%$ higher responsive
effect error than \Aug, and in pixels \Aug's predicted effects are less
context-sensitive ($12.6$ vs.\ $13.7$\,px; $0/3$ seeds favour \dojepa).

\begin{table}[h]
\centering\small
\caption{\textbf{Vision Push-T}, frozen VideoSAUR slots, reversed-pattern OOD
split, mean$\pm$sd over seeds 7/17/27. Block effects are decoded to pixels by a
context-balanced probe trained on IID data. ``Latent ctx.'' is the context shift
of predicted effects in slot space; ``Phys.\ ctx.'' is the same shift decoded to
pixels. Arrows give the better direction. Best in bold, second best underlined.}
\label{tab:vision}
\setlength{\tabcolsep}{3.5pt}
\resizebox{\linewidth}{!}{%
\begin{tabular}{@{}lcccccc@{}}
\toprule
& Factual (px)\,$\downarrow$ & Resp.\ effect (px)\,$\downarrow$ & Cosine\,$\uparrow$ & AUROC\,$\uparrow$ & Latent ctx.\,$\downarrow$ & Phys.\ ctx.\ (px)\,$\downarrow$ \\
\midrule
\Mask\ (C-JEPA-style) & $39.02\pm1.24$ & $20.25\pm0.96$ & $0.600\pm0.108$ & $0.485\pm0.020$ & $\underline{1.003\pm0.054}$ & $19.91\pm1.50$ \\
\Aug                 & $\mathbf{24.88\pm0.58}$ & $\mathbf{16.79\pm0.11}$ & $\mathbf{0.763\pm0.010}$ & $\underline{0.666\pm0.008}$ & $1.024\pm0.014$ & $\mathbf{12.59\pm0.51}$ \\
\dojepa              & $\underline{27.19\pm0.16}$ & $\underline{17.89\pm0.20}$ & $\underline{0.744\pm0.013}$ & $\mathbf{0.691\pm0.008}$ & $\mathbf{0.819\pm0.019}$ & $\underline{13.74\pm0.68}$ \\
\bottomrule
\end{tabular}}
\end{table}

\begin{table}[h]
\centering\small
\caption{\textbf{Vision Push-T screen} (seed 7, OOD). ``Paired pred.'' predicts
both branches as separate targets (an \Aug\ without context twins);
``+\,ctx.\ twins'' adds context-swapped copies as data (\Aug). \dojepa\ here uses
$\lambda_{\mathrm{ctx}}=2$. The true latent effect is itself context-sensitive
(target context shift $2.13$). Best in bold, second best underlined.}
\label{tab:vision-screen}
\setlength{\tabcolsep}{4pt}
\resizebox{\linewidth}{!}{%
\begin{tabular}{@{}lcccccc@{}}
\toprule
& Factual (px)\,$\downarrow$ & Resp.\ eff.\ (px)\,$\downarrow$ & Cos.\,$\uparrow$ & AUROC\,$\uparrow$ & Lat.\ ctx.\,$\downarrow$ & Phys.\ ctx.\ (px)\,$\downarrow$ \\
\midrule
Object-centric JEPA, no mask & $36.68$ & $21.13$ & $0.560$ & $0.481$ & $1.021$ & $20.45$ \\
\Mask & $40.41$ & $21.30$ & $0.479$ & $0.465$ & $\underline{1.008}$ & $21.22$ \\
Paired pred. & $33.23$ & $20.51$ & $0.620$ & $0.529$ & $1.027$ & $17.07$ \\
Paired pred.\ $+$ ctx.\ twins (\Aug) & $\mathbf{24.31}$ & $\mathbf{16.83}$ & $\mathbf{0.756}$ & $\underline{0.672}$ & $1.009$ & $\mathbf{12.30}$ \\
\Paired & $37.15$ & $20.62$ & $0.597$ & $0.537$ & $1.066$ & $17.43$ \\
\dojepa\ ($\lambda_{\mathrm{ctx}}=2$) & $\underline{26.78}$ & $\underline{18.68}$ & $\underline{0.727}$ & $\mathbf{0.703}$ & $\mathbf{0.515}$ & $\underline{13.71}$ \\
\bottomrule
\end{tabular}}
\end{table}

\begin{figure}[h]
\centering
\includegraphics[width=\linewidth]{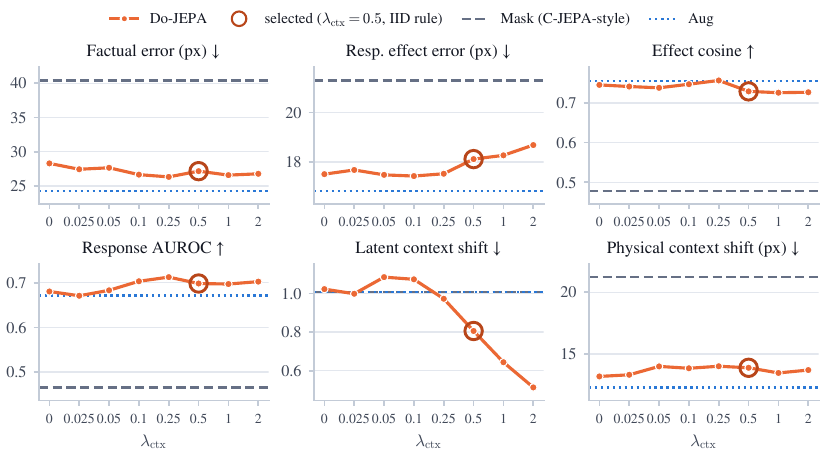}
\caption{\textbf{Context weight in vision Push-T} (seed 7, OOD). \dojepa\ across
$\lambda_{\mathrm{ctx}}$ (orange) against C-JEPA-style masking (dashed) and \Aug\
(dotted) on the same split. \dojepa\ beats masking on every metric at every weight,
and a larger weight halves the latent context shift. The circled weight was
selected on IID data as the smallest one with at least $20\%$ lower latent context
shift and at most $5\%$ higher effect RMSE, $0.02$ lower AUROC and $0.03$ lower
cosine; OOD values are shown.}
\label{fig:vision-sweep}
\end{figure}

\paragraph{Screen and weight sweep.}
Table~\ref{tab:vision-screen} gives the six-condition screen\pid{P12.2} and
Fig.~\ref{fig:vision-sweep} the weight sweep\pid{P12.4}. The three-seed results
(Table~\ref{tab:vision}) also include a $\lambda_{\mathrm{ctx}}=0.25$ run:
factual $27.17$\,px, responsive effect $17.53$\,px, cosine $0.751$, AUROC
$0.692$, latent context shift $0.964$, physical context shift
$13.76$\,px\pid{P12.5}.

\subsection{LeWM Push-T: per-seed results}\label{app:lewm-seeds}

\begin{table}[h]
\centering\small
\caption{\textbf{Effect loss on LeWM, pixel Push-T, per seed} (\Aug\ $\to$
\Paired). Left: held-out pairs from the training distribution (half responsive).
Right: fresh outcome-unfiltered holdout ($5{,}000$ pairs, $323$ responsive).
All metrics are latent, in each model's own latent space; cosines are over all
pairs. Factual columns give the relative change in factual MSE.}
\label{tab:paired-seeds}
\setlength{\tabcolsep}{3.5pt}
\resizebox{\linewidth}{!}{%
\begin{tabular}{@{}lcccccc@{}}
\toprule
& \multicolumn{3}{c}{\textbf{Held-out training-distribution pairs}} & \multicolumn{3}{c}{\textbf{Outcome-unfiltered holdout}} \\
\cmidrule(lr){2-4}\cmidrule(lr){5-7}
Seed & Effect RMSE\,$\downarrow$ & Cosine\,$\uparrow$ & Factual MSE & Effect RMSE\,$\downarrow$ & Cosine\,$\uparrow$ & Factual MSE \\
\midrule
$17$ & $0.2123\to0.1507$ & $0.669\to0.738$ & $-40.9\%$ & $0.1031\to0.0914$ & $0.177\to0.149$ & $-17.7\%$ \\
$27$ & $0.2118\to0.1512$ & $0.683\to0.744$ & $-38.4\%$ & $0.1020\to0.0906$ & $0.169\to0.145$ & $-17.3\%$ \\
$37$ & $0.2160\to0.1565$ & $0.663\to0.730$ & $-39.9\%$ & $0.1020\to0.0907$ & $0.173\to0.145$ & $-17.3\%$ \\
\midrule
Mean change & $-28.4\%$ & $+0.066$ & $-39.7\%$ & $-11.2\%$ & $-0.027$ & $-17.5\%$ \\
\bottomrule
\end{tabular}}
\end{table}

The outcome-unfiltered holdout was checked with a standalone evaluator that
reproduces the original evaluation exactly (difference $0.0$)\pid{P22B}. On its
responsive subset, the mean change is $-0.5\%$ in effect RMSE and $-0.068$ in
cosine; the overall gain comes from non-responsive pairs ($-17.3\%$). The
single-seed development run gave $-29.1\%$ effect RMSE and $+0.077$
cosine\pid{P18.1}.

\paragraph{Physical read-out.}
The metrics above live in each model's own latent space. To read them in physical
units we fit, for each model, a ridge map from its true latent paired effect
$z^{a}-z^{\anull}$ to the block displacement (px) on its own training pairs (ridge
weight chosen on a held-out tenth of them) and apply it to the model's predicted
effect $\hat z^{\,a}-\hat z^{\,\anull}$\pid{P25.0}. This post-hoc diagnostic uses
privileged labels for evaluation only (Table~\ref{tab:pusht-physical}). On held-out
pairs from the training distribution, \Paired\ lowers the responsive physical effect
error on every seed (mean $7.0\%$; $8.2\%$ over all pairs) and turns the responsive
direction from wrong to right (cosine $-0.19\to+0.23$). On the $323$ responsive
pairs of the outcome-unfiltered holdout it is $2.6$--$3.2\%$ worse (cosine
$0.18\to0.11$). Absolute accuracy is low for both models: their responsive error is
close to predicting no displacement, whereas decoding the true latent effects
reaches $6.0$--$9.5$\,px (cosine $0.71$--$0.75$), so the latent code holds more of
the effect than either predictor uses.

\paragraph{Training on natural action sequences.}
To test whether the gain follows the training intervention distribution, we
trained \Aug\ and \Paired\ with the unchanged recipe on the natural corpus
($10{,}000$ pairs with contiguous expert macros; Table~\ref{tab:corpora}) and
evaluated all twelve models on the expert-contiguous arm of the natural sequence
benchmark ($2{,}500$ pairs whose episodes are excluded from the natural training
corpus) and on the responsive pairs of the outcome-unfiltered
holdout\pid{P25.3}. The read-out is fitted per model on the natural corpus. The
design was fixed in the protocol file before the runs
(Table~\ref{tab:pusht-dist}). Trained on natural sequences, \Paired\ lowers the
responsive physical effect error on the natural test set by $13.3$--$13.6\%$ and
on the holdout by $5.3$--$5.7\%$; trained on the enriched corpus it is $6\%$
and $4$--$5\%$ worse on the same sets. The natural-trained models are also far more
accurate in absolute terms (responsive error $10$--$12$\,px against $19.7$\,px for
predicting no displacement; cosine $0.80$). The same pattern appears in the
planning transfer matrix (App.~\ref{app:planning}).

\begin{table}[h]
\centering\small
\caption{\textbf{The LeWM Push-T gain follows the training intervention
distribution.} \Paired\ vs.\ \Aug\ trained with the same recipe on the enriched or
the natural corpus; per-seed relative change (seeds 17/27/37) and mean responsive
physical cosine (\Aug\ $\to$ \Paired). Latent effect RMSE is over all pairs,
physical error over responsive pairs.}
\label{tab:pusht-dist}
\setlength{\tabcolsep}{3pt}
\resizebox{\linewidth}{!}{%
\begin{tabular}{@{}llccc@{}}
\toprule
Trained on & Tested on & Latent effect RMSE (\%) & Physical effect RMSE (\%) & Physical cosine \\
\midrule
Natural & natural test ($2{,}500$) & $-18.1/-18.4/-18.2$ & $-13.6/-13.6/-13.3$ & $0.754\to0.798$ \\
Natural & unfiltered holdout ($323$) & $-8.6/-8.1/-8.2$ & $-5.7/-5.6/-5.3$ & $0.441\to0.547$ \\
Enriched & natural test ($2{,}500$) & $-3.6/-3.1/-3.2$ & $+5.8/+5.8/+6.1$ & $0.354\to0.241$ \\
Enriched & unfiltered holdout ($323$) & $-1.0/-0.2/-0.3$ & $+3.7/+5.3/+5.2$ & $0.168\to0.101$ \\
\bottomrule
\end{tabular}}
\end{table}

\begin{table}[h]
\centering\small
\caption{\textbf{Physical read-out of the LeWM Push-T effect gain} (\Aug\ $\to$
\Paired, per seed; responsive pairs; px). ``Zero'': error of predicting no
displacement; ``decoded true'': the read-out applied to the true latent effects
(\Aug\ / \Paired).}
\label{tab:pusht-physical}
\setlength{\tabcolsep}{3.5pt}
\resizebox{\linewidth}{!}{%
\begin{tabular}{@{}lcccccc@{}}
\toprule
& \multicolumn{4}{c}{\textbf{Held-out training-distribution pairs}} & \multicolumn{2}{c}{\textbf{Outcome-unfiltered holdout}} \\
\cmidrule(lr){2-5}\cmidrule(lr){6-7}
Seed & Effect RMSE\,$\downarrow$ & Cosine\,$\uparrow$ & Zero & Decoded true & Effect RMSE\,$\downarrow$ & Cosine\,$\uparrow$ \\
\midrule
$17$ & $11.75\to11.03$ & $-0.188\to0.240$ & $10.96$ & $7.49$ / $7.24$ & $13.21\to13.56$ & $0.206\to0.124$ \\
$27$ & $11.73\to10.67$ & $-0.198\to0.237$ & $10.67$ & $6.23$ / $6.02$ & $13.29\to13.71$ & $0.177\to0.112$ \\
$37$ & $13.78\to13.00$ & $-0.170\to0.212$ & $12.63$ & $9.54$ / $9.33$ & $13.38\to13.77$ & $0.160\to0.084$ \\
\bottomrule
\end{tabular}}
\end{table}

\subsection{CausalWorld}\label{app:cw-results}

\begin{table}[h]
\centering\small
\caption{\textbf{CausalWorld fresh-seed confirmation} (model seeds 47/57/67,
$40$k updates, $\beta=0.25$, frozen selector, OOD opened once). Relative changes of
the candidate against the reference, mean over seeds. The rule for each split: at
least one benefit (AUROC $+0.02$, cosine $+0.05$ or $10\%$ lower responsive
effect error) with at most $10\%$ higher factual error, on at least $2/3$ seeds.}
\label{tab:p2012}
\setlength{\tabcolsep}{3pt}
\resizebox{\linewidth}{!}{%
\begin{tabular}{@{}llccccccc@{}}
\toprule
Hypothesis & Split & Resp.\ effect RMSE & Cosine & AUROC & Factual RMSE & Ctx.\ shift & Seeds & Result \\
\midrule
H1: \Paired\ vs.\ \Aug & visual & $11.3\%$ lower & $+0.056$ & $-0.019$ & $17.2\%$ higher & --- & $1/3$ & not met \\
 & mechanism & $22.0\%$ lower & $+0.067$ & $-0.022$ & $15.1\%$ higher & --- & $1/3$ & not met \\
 & composed & $24.3\%$ lower & $+0.059$ & $-0.012$ & $17.8\%$ higher & --- & $1/3$ & not met \\
\midrule
H2: \dojepa\ vs.\ \Paired & visual & $15.2\%$ higher & $-0.130$ & $-0.019$ & $33.3\%$ higher & $47.2\%$ lower & $0/3$ & not met \\
 & mechanism & $7.1\%$ higher & $-0.113$ & $-0.021$ & $31.7\%$ higher & $46.1\%$ lower & $0/3$ & not met \\
 & composed & $8.8\%$ higher & $-0.125$ & $-0.031$ & $32.3\%$ higher & $46.6\%$ lower & $0/3$ & not met \\
\midrule
H3: \dojepa\ vs.\ \Aug & visual & $2.2\%$ higher & $-0.075$ & $-0.038$ & $56.2\%$ higher & --- & $0/3$ & not met \\
 & mechanism & $16.5\%$ lower & $-0.046$ & $-0.043$ & $51.6\%$ higher & --- & $0/3$ & not met \\
 & composed & $17.7\%$ lower & $-0.066$ & $-0.043$ & $55.8\%$ higher & --- & $0/3$ & not met \\
\bottomrule
\end{tabular}}
\end{table}

\paragraph{Training budget and a single-seed pass.}
On the rebuilt benchmark at $10$k updates, the IID rule (factual error at most
$10\%$ higher, responsive effect error at most $5\%$ higher, cosine and AUROC at
most $0.03$ lower, context shift at least $20\%$ lower than \Paired) selected
$\beta=0.25$\pid{P20.7}. On seed 7 this model passed its OOD gate on all three
splits against \Aug, and \Paired\ alone would also have passed ($+0.10$ to $+0.11$
cosine; $9$--$11\%$ lower responsive effect error), so the pass is attributable to
pairing. This run was single-seed and not fully blind to OOD results, because OOD
numbers for $\beta=0$ and $\beta=1$ were known. Across the checkpoints of longer
runs, factual latent MSE and causal metrics were strongly anti-correlated
($|\text{Spearman}|=0.93$), so we froze a causal-aware selector for the fresh-seed
test\pid{P20.10, P20.11}.

\paragraph{Fresh-seed confirmation.}
Table~\ref{tab:p2012} gives the full result\pid{P20.12}. H1 reproduces the
benefit of Table~\ref{tab:cw} on fresh model seeds (responsive effect error
$11$--$24\%$ lower, cosine $+0.06$) with a factual cost of $15$--$18\%$, just above
the preregistered $10\%$ tolerance, so it is not confirmed (only seed $67$ passes);
Table~\ref{tab:cw-ft} shows that this cost is specific to training from scratch.
The context term (H2, H3) lowers the latent context shift by $46$--$47\%$ but costs
accuracy, as discussed in Sec.~\ref{sec:q3}. The frozen selector changed none of the
nine checkpoints.

\paragraph{Diagnosis.}
The audits of Sec.~\ref{sec:q3} use IID data only. The effect norm of the full
model falls with depth relative to \Paired: $23\%$ in the late encoder, $43\%$
after the projection and $48\%$ at the predictor output, where decoded physical
error is $29\%$ higher ($3/3$ seeds)\pid{P20.13}. The pair and context gradients
are not in conflict (cosine $\approx0.69$), but the context gradient is $2.25$
times larger and $58\%$ of it comes from the target side\pid{P20.13}. Shrinkage is not the
explanation: only $0$--$1.3\%$ of the suppressed discrepancy is radial, and
rescaling effects made the gap larger\pid{P20.15}. A predictor-side variant, which
drops the target-side term, keeps the latent pair cosine at $0.94$--$0.95$ while
the physically weighted
error rises by $76\%$, $17\%$ and $12\%$\pid{P20.16}. The linear physical decoder
has nine outputs; an SVD shows that its top two directions in the $192$-D latent
space (not two coordinates) carry about $99.8\%$ of the physical error; the
variant has $19.4\%$, $9.6\%$ and $17.5\%$ more residual energy there (the audit
criterion holds on $2/3$ seeds)\pid{P20.17}. These diagnostics rely on decoders fitted with
privileged labels.

\paragraph{IID results on three benchmark instances.}
Before opening the OOD splits of Table~\ref{tab:cw}, the IID validation results
showed the same pattern on all three instances\pid{P21A}: responsive effect
error $8.76\to7.96$\,mm, cosine $0.482\to0.537$, context shift $2.38\to0.79$,
factual error $15.07\to22.84$\,mm, AUROC $0.880\to0.817$ (\Aug\ $\to$ \Paired).

\paragraph{Where the factual cost comes from (post-hoc diagnostic).}
The Push-T runs fine-tune a pretrained model; the CausalWorld runs train from
scratch. We therefore started from each instance's converged \Aug\ checkpoint (the
one locked for Table~\ref{tab:cw}) and trained it for $2{,}000$ further updates as
\Aug\ or as \Paired\ ($\alpha=1$), with learning rate $10^{-5}$ held constant after
$100$ warm-up updates, the same data and batch composition, and no checkpoint
selection (the last checkpoint is evaluated)\pid{P25.1}. Probes were fitted as in
App.~\ref{app:causalworld}. The design was fixed in a protocol file before the runs;
it was chosen after the results of Table~\ref{tab:cw}. Table~\ref{tab:cw-ft} gives
the per-instance results: factual error stays within $-4.4$ to $+2.0\%$, response
AUROC is unchanged, and responsive effect error, cosine and latent context shift
improve on every instance and split.

We also retrained \Paired\ from scratch with the full recipe ($40$k updates,
model seed $107$, IID factual-MSE selection) and the effect weight lowered to
$\alpha=0.25$\pid{P25.2}. Against the same \Aug\ models, factual error rises by
$3.7/11.5/15.8\%$ (IID; instances 17/27/37) instead of $52\%$, responsive effect
error falls by $10.2/6.5/10.5\%$ IID and $10$--$15\%$ under the physics shifts,
cosine rises by $0.02$--$0.09$ and response AUROC falls by at most $0.044$.
Against \Paired\ with $\alpha=1$, factual error is $25$--$30\%$ lower on every
instance and split, IID responsive effect error is the same ($-3.1$ to $+1.6\%$),
and the latent context shift is about twice as large. The effect weight therefore
sets the trade-off when the representation is trained from scratch
(Table~\ref{tab:cw-regimes}).

\begin{table}[h]
\centering\small
\caption{\textbf{CausalWorld, fine-tuning a trained \Aug\ model with the effect
loss} ($2{,}000$ updates; \Paired\ vs.\ \Aug\ fine-tuned identically). Values are
per benchmark instance (dataset seeds 17/27/37): relative changes in \% for factual
error, responsive effect error and latent context shift; differences for cosine and
AUROC. $^{*}$Also decorrelates the visual context.}
\label{tab:cw-ft}
\setlength{\tabcolsep}{3pt}
\resizebox{\linewidth}{!}{%
\begin{tabular}{@{}lccccc@{}}
\toprule
Split & Factual RMSE (\%) & Resp.\ effect RMSE (\%) & Effect cosine & Response AUROC & Latent ctx.\ shift (\%) \\
\midrule
IID & $-0.5/+1.7/-3.0$ & $-6.1/-8.9/-9.5$ & $+0.044/+0.051/+0.091$ & $-0.006/-0.004/-0.001$ & $-24.2/-24.7/-28.3$ \\
Visual & $-0.2/+2.0/-4.4$ & $-7.5/-8.7/-10.6$ & $+0.060/+0.050/+0.080$ & $+0.009/+0.006/+0.010$ & $-24.8/-24.4/-28.1$ \\
Mechanism$^{*}$ & $+0.1/+1.6/-4.0$ & $-11.5/-12.5/-16.2$ & $+0.042/+0.041/+0.085$ & $+0.003/+0.006/+0.004$ & $-25.7/-25.4/-28.4$ \\
Composed & $-0.8/+1.9/-4.2$ & $-11.4/-12.7/-15.7$ & $+0.042/+0.064/+0.079$ & $+0.011/+0.015/+0.011$ & $-25.5/-24.9/-28.5$ \\
\bottomrule
\end{tabular}}
\end{table}

\FloatBarrier
\section{Planning Study Details}\label{app:planning}

\paragraph{Can paired supervision be added to a pretrained planner?}
Yes, with a centered adapter, but it does not improve planning; planning success
never selects a checkpoint. Fine-tuning the released LeWM Push-T model
\citep{maes2026lewm,swm2026} with $\Ld$ lowers effect RMSE by $29.1\%$ relative to
\Aug\ but drops planning success from $94\%$ to $54\%$ (\Aug: $58\%$)\pid{P18.1};
the damage is in the predictor's dynamics, not in the representation\pid{P18.2}.
A centered action-effect adapter (Eq.~\ref{eq:adapter}) adds
$g_\phi(h_a)-g_\phi(h_{a\leftarrow\anull})$ to the frozen predictor, so the no-op
prediction cannot change. With a $5$-step rollout-preservation loss and a
checkpoint rule that caps rollout drift, it keeps $94\%$ success; these three
parts were introduced together, so centering alone is not shown to suffice.
\Paired\ lowers effect RMSE by $5.2$--$5.8\%$ on four seeds\pid{P18.5, P18.6}, and
at $n=600$ episodes the adapted planners are equivalent to the base planner within
$\pm3\pp$\pid{P19.0, P19.1}. As in end-to-end training (App.~\ref{app:lewm-seeds}),
the gain follows the training intervention distribution: adapters trained on
natural sequences gain $2.47\%$ on natural actions, against $0.62\%$ for adapters
trained on the enriched corpus\pid{P19.1}. The rest of this section gives the
details.

\begin{table}[h]
\centering\small
\caption{\textbf{Ways of adding paired supervision to LeWM} (seed 7, $50$-episode
planning; effect metrics on each configuration's own validation pairs, so compare
within a row group only). ``Adapted'' says whether the selected checkpoint differs
from the base model.}
\label{tab:ladder}
\setlength{\tabcolsep}{4pt}
\begin{tabular}{@{}llcccc@{}}
\toprule
Configuration & Model & Adapted & Effect RMSE\,$\downarrow$ & Cosine\,$\uparrow$ & Planning\,$\uparrow$ \\
\midrule
Base LeWM & --- & --- & --- & --- & $94\%$ \\
\midrule
Full fine-tuning & \Aug & yes & $0.2145$ & $0.659$ & $58\%$ \\
 & \Paired & yes & $\mathbf{0.1522}$ & $\mathbf{0.736}$ & $54\%$ \\
\midrule
Expert data only & --- & yes & $0.6588$ & $0.428$ & $\mathbf{94\%}$ \\
Frozen representation & \Aug & yes & $0.2998$ & $0.810$ & $36\%$ \\
 & \Paired & yes & $0.2889$ & $0.863$ & $36\%$ \\
\midrule
Residual adapter & \Aug & yes & $0.6503$ & $0.443$ & $78\%$ \\
 & \Paired & \textbf{no} & $0.6580$ & $0.441$ & ($94\%$) \\
$+$ preservation sweep & \Aug & yes & $0.6382$ & $0.447$ & $84\%$ \\
 & \Paired & yes & $0.5813$ & $0.534$ & $78\%$ \\
\midrule
Centered action-effect adapter & \Aug & yes & $0.6014$ & $0.505$ & $\mathbf{94\%}$ \\
 & \Paired & yes & $\mathbf{0.5701}$ & $\mathbf{0.542}$ & $\mathbf{94\%}$ \\
\bottomrule
\end{tabular}
\end{table}

\paragraph{The ladder.}
Table~\ref{tab:ladder} gives every configuration\pid{P18.1--P18.5}. With full
fine-tuning, the embedding cosine to the base model falls to $0.689$, and the
Pearson correlation of squared latent distances to $0.553$. Freezing encoder and
projector makes the embeddings bitwise identical (embedding MSE $0$, cosine $1$,
distance correlation $1$) and still gives $36\%$. In the residual-adapter row,
the \Paired\ model's selector returned epoch $0$, the unchanged base model: its
epoch-$1$ checkpoint missed the $1.20\times$ preservation cap by $0.0006$. Its
$94\%$ is the base model, not an adapted one; the next configuration forbids
epoch $0$. The \Aug\ model of the preservation sweep has one-step prediction
cosine $0.9994$ with the base model and plans $10\pp$ worse.

\begin{figure}[h]
\centering
\resizebox{\linewidth}{!}{%
\begin{tikzpicture}[
  font=\small,
  frozen/.style={draw=black!45, fill=black!5, rounded corners=2pt, inner sep=3.5pt, minimum height=8mm, minimum width=15mm, align=center},
  train/.style={draw=acc!90!black, fill=acc!14, rounded corners=2pt, thick, minimum height=7mm, minimum width=10mm},
  io/.style={draw=navy!70, fill=navy!6, rounded corners=2pt, inner sep=4pt, minimum height=8mm, align=center},
  op/.style={draw=black!70, fill=white, circle, inner sep=0pt, minimum size=5.2mm, font=\normalsize},
  ar/.style={-{Latex[length=1.8mm]}, black!70, semithick},
  arT/.style={-{Latex[length=1.8mm]}, acc!85!black, semithick},
  note/.style={font=\scriptsize, text=black!55, align=center}]
\node[io]     (in1)  at (0,2.3)   {$z_{t-H+1:t},\ a$};
\node[frozen] (tr1)  at (2.75,2.3) {predictor\\trunk};
\coordinate   (h1)   at (4.15,2.3);
\node[frozen] (head) at (5.55,2.3) {output\\head};
\node[op]     (plus) at (9.0,2.3) {$+$};
\node[io]     (out)  at (10.55,2.3) {$F_{\mathrm{adapt}}(z,a)$};
\node[io]     (in0)  at (0,0)     {$z_{t-H+1:t},\ a_{\leftarrow\varnothing}$};
\node[frozen] (tr0)  at (2.75,0)  {predictor\\trunk};
\coordinate   (h0)   at (4.15,0);
\node[train]  (g1)   at (6.2,1.05) {$g_\phi$};
\node[train]  (g0)   at (6.2,0)    {$g_\phi$};
\node[op]     (minus) at (7.75,0.52) {$-$};
\draw[ar] (in1) -- (tr1);
\draw[ar] (tr1.east) -- (h1) -- (head.west);
\draw[ar] (head) -- node[above=0.5pt, font=\scriptsize]{$F_{\mathrm{base}}(z,a)$} (plus);
\draw[ar] (plus) -- (out);
\draw[ar] (in0) -- (tr0);
\draw[black!70, semithick] (tr0.east) -- (h0);
\draw[arT] (h1) |- (g1);
\draw[arT] (h0) -- (g0);
\draw[arT] (g1.east) -| (minus.north);
\draw[arT] (g0.east) -| (minus.south);
\draw[arT] (minus.east) -| node[pos=0.72, left, font=\scriptsize, text=acc!85!black]{$\delta$} (plus.south);
\fill[black!75] (h1) circle (1.2pt) node[above=1.5pt, font=\scriptsize, text=black]{$h_a$};
\fill[black!75] (h0) circle (1.2pt) node[below=1.5pt, font=\scriptsize, text=black]{$h_{a\leftarrow\varnothing}$};
\draw[black!35, densely dashed] (g1.south) -- (g0.north);
\node[note, anchor=east] at (5.6,0.52) {same\\weights};
\node[note, above=0.6mm of tr1] {frozen};
\node[note, above=0.6mm of head] {frozen};
\node[note, below=0.6mm of tr0] {same frozen trunk;\\last action set to the no-op};
\node[note, below=0.8mm of g0, text=acc!85!black] {trainable, rank $\le 16$};
\node[note, anchor=north, text=acc!85!black] at (10.75,1.72) {$\delta=g_\phi(h_a)-g_\phi(h_{a\leftarrow\varnothing})$\\$\delta=0$ if $a=a_\varnothing$};
\end{tikzpicture}}
\caption{\textbf{Centered action-effect adapter} (Eq.~\ref{eq:adapter}). The frozen
predictor runs twice on the same history: with the action $a$, and with the last
action replaced by the no-op. A trainable map $g_\phi$ with shared weights is applied
to both feature vectors, and their difference $\delta$ is added to the base
prediction. $\delta$ is exactly zero for $a=\anull$, whatever the parameters $\phi$.}
\label{fig:method}
\end{figure}

\paragraph{Properties of the adapter.}
With $\Fb$ the frozen pretrained predictor, $h_a$ its internal feature under
action $a$, and $h_{a\leftarrow\anull}$ the feature with the same history and the
last action replaced by the no-op, the centered action-effect adapter is
\begin{equation}
\Fa(z,a)=\Fb(z,a)+g_\phi(h_a)-g_\phi(h_{a\leftarrow\anull}),
\label{eq:adapter}
\end{equation}
where $g_\phi(h)=W_\phi h$ is bias-free, with $W_\phi=\frac{\alpha}{r}BA$,
$A\in\R^{16\times2048}$ and $B\in\R^{192\times16}$ zero-initialised, so
$W_\phi$ starts at zero and has rank at most $16$. It is low-rank in the sense of \citet{hu2022lora} and residual in the sense
of \citet{houlsby2019adapters}; the failure it avoids, the collapse of planning
under fine-tuning, is a form of catastrophic forgetting
\citep{french1999catastrophic,kirkpatrick2017ewc}. It is trained with the paired
prediction loss, a penalty on deviation from the base model's own $5$-step
rollouts on expert data and, for \Paired, $\Ld$; a checkpoint is eligible only if
its $5$-step rollout error is at most $1.2\times$ the base model's
(Table~\ref{tab:hparams}).

\begin{proposition}[Exact no-op invariance]\label{prop:noop}
For all $z$ and all $\phi$, $\Fa(z,\anull)=\Fb(z,\anull)$.
\end{proposition}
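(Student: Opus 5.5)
The plan is to substitute $a=\anull$ into Eq.~\ref{eq:adapter} and show that the two adapter terms are evaluated at the same argument, so they cancel regardless of $\phi$. By definition, $h_a$ is the internal feature of the frozen predictor $\Fb$ computed from the history $z$ and the action $a$. $h_{a\leftarrow\anull}$ is the feature computed from the same history with the last action replaced by the no-op. When $a=\anull$, replacing the last action by the no-op leaves the input unchanged. The frozen trunk receives identical inputs $(z_{t-H+1:t},\anull)$ in both branches, so $h_{\anull}=h_{\anull\leftarrow\anull}$ as vectors.

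The next step is to note that $g_\phi$ is a deterministic function of its input for every fixed $\phi$. In fact it is linear, $g_\phi(h)=W_\phi h$ with $W_\phi=\frac{\alpha}{r}BA$, and both branches use the same $\phi$ (shared weights, Fig.~\ref{fig:method}). Hence $g_\phi(h_{\anull})-g_\phi(h_{\anull\leftarrow\anull})=0$. Substituting into Eq.~\ref{eq:adapter} gives $\Fa(z,\anull)=\Fb(z,\anull)+0$. This holds for all $z$ and $\phi$, since neither was constrained. Linearity is not needed; determinism and weight sharing suffice.

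The only real obstacle is confirming that the trunk is deterministic at evaluation, so that its two passes on identical inputs give bitwise-identical features. This means no stochastic components such as dropout are active in the frozen predictor, and no state is carried between calls. I would state this as the standing assumption, since the predictor is frozen and evaluated in inference mode. Under that assumption the argument is a one-line cancellation, and it holds exactly, including in floating point, because the same computation is repeated on the same input.
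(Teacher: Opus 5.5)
Your proof is correct and matches the paper's one-line argument: with $a=\anull$ both branches of Eq.~\ref{eq:adapter} receive the same input, so $h_{\anull\leftarrow\anull}=h_{\anull}$ and the two $g_\phi$ terms cancel for every $\phi$. Your extra remarks are accurate refinements within the same argument: linearity of $g_\phi$ is not needed, and bitwise equality in floating point presumes a deterministic, stateless frozen trunk.
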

\begin{proof}
For $a=\anull$ the two branches of Eq.~\ref{eq:adapter} receive the same input,
so $h_{\anull\leftarrow\anull}=h_{\anull}$ and the adapter terms cancel.
\end{proof}

\begin{proposition}[Linear, bounded correction]\label{prop:lin}
$g_\phi(h_a)-g_\phi(h_{a'})=W_\phi(h_a-h_{a'})$, so
$\lVert g_\phi(h_a)-g_\phi(h_{a'})\rVert\le\sigma_{\max}(W_\phi)\lVert h_a-h_{a'}\rVert$.
\end{proposition}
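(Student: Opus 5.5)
The plan is to use nothing beyond the definition of the adapter given just before Eq.~\ref{eq:adapter}. There, $g_\phi(h)=W_\phi h$ with $W_\phi=\frac{\alpha}{r}BA$ and no bias term. The first step is to note that $g_\phi$ is therefore a linear map from $\R^{2048}$ to $\R^{192}$, so $g_\phi(h_a)-g_\phi(h_{a'})=W_\phi h_a-W_\phi h_{a'}=W_\phi(h_a-h_{a'})$ by distributivity of matrix multiplication. This identity holds for any two actions $a,a'$ and any history, because the frozen trunk that produces $h_a$ and $h_{a'}$ plays no role beyond supplying two vectors of the same dimension.

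The second step is the bound. I would apply the definition of the operator (spectral) norm: for any matrix $W$ and vector $v$, $\lVert Wv\rVert_2\le\lVert W\rVert_{2}\lVert v\rVert_2$, and $\lVert W\rVert_2=\sigma_{\max}(W)$. If a justification of that equality is wanted, it follows from the singular value decomposition $W=U\Sigma V^\top$: orthogonal $U,V$ preserve the Euclidean norm, so $\lVert Wv\rVert=\lVert\Sigma V^\top v\rVert\le\sigma_{\max}\lVert V^\top v\rVert=\sigma_{\max}\lVert v\rVert$. Setting $v=h_a-h_{a'}$ gives the stated inequality.

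As an optional remark that costs nothing, since $\operatorname{rank}W_\phi\le16$, one could also write $\sigma_{\max}(W_\phi)\le\frac{\alpha}{r}\sigma_{\max}(B)\sigma_{\max}(A)$ by submultiplicativity. This makes explicit that the correction starts at zero when $B$ is zero-initialised and is controlled by the adapter factors. Taking $a'=a\leftarrow\anull$ links the statement back to Eq.~\ref{eq:adapter}: the correction $\Fa-\Fb$ is at most $\sigma_{\max}(W_\phi)\lVert h_a-h_{a\leftarrow\anull}\rVert$. It therefore vanishes when the two features coincide, consistent with Proposition~\ref{prop:noop}.

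There is no real obstacle. The only point needing care is to state that the norms are Euclidean, because the identification of the induced norm with $\sigma_{\max}$ depends on that choice, and to make the bias-freeness of $g_\phi$ explicit, since an affine $g_\phi$ would still give the same difference identity but the linearity claim relies on the absent bias.
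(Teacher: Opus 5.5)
Your proof is correct and takes the same route as the paper. The paper gives no separate proof: it treats the statement as immediate from $g_\phi(h)=W_\phi h$ being linear together with $\lVert Wv\rVert_2\le\sigma_{\max}(W)\lVert v\rVert_2$, which is exactly what you do (your optional remark's ``since $\operatorname{rank}W_\phi\le16$'' is not what yields the submultiplicative bound, but that remark does not affect the proof).
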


\begin{proposition}[Rollout drift]\label{prop:drift}
If the per-step correction has norm at most $\varepsilon$ and the base transition
is $L$-Lipschitz in its state, the $k$-step open-loop deviation is at most
$\varepsilon(L^k-1)/(L-1)$ for $L\neq1$ and $k\varepsilon$ for $L=1$.
\end{proposition}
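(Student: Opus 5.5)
We will prove Proposition~\ref{prop:drift} by a one-step recursion on the deviation followed by a geometric sum. Fix a history $z_0$ and an action sequence. Let $x_\ell$ be the base open-loop rollout, $x_{\ell+1}=\Fb(x_\ell,a_\ell)$, and let $y_\ell$ be the adapted rollout, $y_{\ell+1}=\Fa(y_\ell,a_\ell)$, both started from $x_0=y_0=z_0$. By Eq.~\ref{eq:adapter} we can write $\Fa(y,a)=\Fb(y,a)+\delta(y,a)$ with $\delta=g_\phi(h_a)-g_\phi(h_{a\leftarrow\anull})$. The hypothesis is $\lVert\delta\rVert\le\varepsilon$ along the rollout. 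Proposition~\ref{prop:lin} gives one concrete way to guarantee such an $\varepsilon$, through $\sigma_{\max}(W_\phi)$ times a bound on the feature difference, but we do not need that step here. We set $e_\ell=\lVert y_\ell-x_\ell\rVert$, so $e_0=0$.

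\textbf{Key inequality.} Add and subtract $\Fb(y_\ell,a_\ell)$ and apply the triangle inequality:
\[
e_{\ell+1}\le\lVert\Fb(y_\ell,a_\ell)-\Fb(x_\ell,a_\ell)\rVert+\lVert\delta(y_\ell,a_\ell)\rVert\le L\,e_\ell+\varepsilon .
\]
The Lipschitz assumption is used at the same action $a_\ell$, which holds because both rollouts are open-loop with identical actions.

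\textbf{Unrolling.} By induction on $k$, $e_k\le\varepsilon\sum_{j=0}^{k-1}L^j$. The base case is $e_0=0$. For the inductive step, $e_{k+1}\le L\varepsilon\sum_{j<k}L^j+\varepsilon=\varepsilon\sum_{j\le k}L^j$. Summing the geometric series gives $\varepsilon(L^k-1)/(L-1)$ for $L\neq1$ and $k\varepsilon$ for $L=1$.

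\textbf{Main obstacle.} The obstacle is the history structure, not the algebra. $F$ takes a window of $H$ past latents (Eq.~\ref{eq:predictor}), so "state" must mean the whole window. A rolled window shares $H-1$ entries with the previous one and gains a new one. With the max norm over the window, the shifted entries contribute deviations already bounded by $e_\ell$. The new entry is bounded by the key inequality, provided $L$ is read as the Lipschitz constant of $\Fb$ with respect to the window in that norm and $L\ge1$. If $L<1$, the max norm over the window keeps the old deviation, so the recursion is really $e_{\ell+1}\le\max(e_\ell,\,L e_\ell+\varepsilon)$. That recursion still yields the stated bound as long as the latest entry dominates, which is what we would check.

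As a fallback, we would state the result for the Markov case $H=1$, which is what "Lipschitz in its state" literally means, and note that the window case follows by this reinterpretation of the state.
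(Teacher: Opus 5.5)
Your proof is correct and is the same argument as the paper's: the one-step recursion $e_{\ell+1}\le L e_\ell+\varepsilon$ with $e_0=0$, unrolled into $\varepsilon\sum_{j=0}^{k-1}L^j$. The check you defer in the window case holds automatically: the partial sums $S_k=\varepsilon\sum_{j<k}L^j$ increase in $k$ and satisfy $S_{k+1}=LS_k+\varepsilon$, so $\max(S_k,\,LS_k+\varepsilon)=S_{k+1}$ for every $L\ge0$.
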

\begin{proof}
Let $\delta_j$ be the deviation after $j$ steps. Then
$\delta_{j+1}\le L\delta_j+\varepsilon$ with $\delta_0=0$; unrolling gives
$\delta_k\le\varepsilon\sum_{j=0}^{k-1}L^j$.
\end{proof}

\begin{table}[h]
\centering\small
\caption{\textbf{Numerical checks of the adapter properties} (enriched-trained
\Paired\ adapter, seed 7, $256$ examples).}
\label{tab:props}
\setlength{\tabcolsep}{4pt}
\begin{tabular}{@{}lll@{}}
\toprule
Property & Measured & Check \\
\midrule
Prop.~\ref{prop:noop}: no-op correction & $0.0$ (bitwise) & \pass \\
Prop.~\ref{prop:lin}: linearity & max.\ error $\le2\times10^{-7}$ & \pass \\
Prop.~\ref{prop:lin}: operator bound & holds for all pairs; mean tightness $0.037$ & \pass \\
Prop.~\ref{prop:drift}: sampled $\hat L$ & $0.714$ (lower bound, not certified) & --- \\
Spectral / Frobenius norm of $W_\phi$ & $0.857$ / $1.203$ & --- \\
Rank of $W_\phi$ ($192\times2048$) & $\le16$ by construction & --- \\
Mean correction norm / mean latent norm & $0.282$ / $13.9$ & --- \\
\bottomrule
\end{tabular}
\end{table}

We introduced Prop.~\ref{prop:drift} expecting errors to grow geometrically. The
measured drift is $0.28$, $0.35$, $0.38$, $0.38$ and $0.42$ for $k=1,\dots,5$,
far below the bound evaluated with our Lipschitz estimate.
That estimate ($\hat L\approx0.71$ for the enriched-trained adapter, $0.75$ for
the natural-trained one) is the mean over $8$ batches of the largest
finite-difference ratio found with $24$ random perturbations of the map from the
$3$-frame latent history to the next frame. It is a lower bound on the Lipschitz
constant, not a certified value. Moreover, the rollout state is the whole
$3$-frame history, two frames of which are carried over unchanged at each step,
so $\hat L<1$ does not show that rollouts contract\pid{P20-theory}. A
consequence of Prop.~\ref{prop:lin}: the no-op anchor
cancels in any pairwise contrast, so the architecture can express $do(a)$
versus $do(a')$ without change. Whether the learned map transfers is tested
below.

\paragraph{Confirmation.}
Table~\ref{tab:confirm} gives the three fresh seeds\pid{P18.6}. All relative
criteria of the protocol passed; its absolute criterion ($\ge90\%$) was out of reach
because the base model itself scores $83.0\%$ on these $200$ episodes.

\begin{table}[h]
\centering\small
\caption{\textbf{Centered adapter, three fresh seeds.} Effect RMSE on held-out
enriched pairs; planning over $200$ episodes (base LeWM: $83.0\%$).}
\label{tab:confirm}
\setlength{\tabcolsep}{5pt}
\begin{tabular}{@{}lccc@{}}
\toprule
& Seed 17 & Seed 27 & Seed 37 \\
\midrule
Effect RMSE, \Aug & $0.5835$ & $0.5956$ & $0.6043$ \\
Effect RMSE, \Paired & $\mathbf{0.5496}$ & $\mathbf{0.5640}$ & $\mathbf{0.5726}$ \\
Relative reduction & $5.80\%$ & $5.31\%$ & $5.25\%$ \\
\midrule
Rollout ratio, \Paired & $1.032$ & $1.036$ & $1.033$ \\
No-op and representation unchanged & yes & yes & yes \\
\midrule
Planning, \Aug & $81.5\%$ & $84.5\%$ & $84.5\%$ \\
Planning, \Paired & $80.5\%$ & $83.5\%$ & $83.0\%$ \\
\bottomrule
\end{tabular}
\end{table}

\paragraph{Training distribution.}
Training fresh adapters on naturally occurring expert action sequences and
crossing training with test distributions gives\pid{P19.1}:
\begin{center}\small
\begin{tabular}{@{}lcccc@{}}
\toprule
& Train E & Train N & Diagonal minus off-diagonal & $95\%$ CI \\
\midrule
Test E & $\mathbf{5.43\%}$ & $0.56\%$ & $4.88\pp$ ($9.8\times$) & $[4.66,5.09]$ \\
Test N & $0.62\%$ & $\mathbf{2.47\%}$ & $1.85\pp$ ($4.0\times$) & $[1.77,1.93]$ \\
\bottomrule
\end{tabular}
\end{center}
Both contrasts exclude zero: each adapter gains most on the distribution it was
trained on. The natural-trained gain ($2.47\%$, CI $[2.37,2.56]$) is below the $5\%$
threshold fixed in the protocol but clearly above zero.

\paragraph{Planning equivalence.}
At $n=600$ episodes with per-episode pairing, the natural-trained \Paired\
adapter (seed 7) reaches $87.7\%$ against $87.8\%$ for the base planner: a paired
difference of $-0.17\pp$, exact McNemar $p=1.0$, and TOST equivalence at
$\pm3\pp$ ($90\%$ CI $[-2.20,+1.87]$)\pid{P19.1}. All six natural-trained
adapters and the enriched-trained adapters are equivalent at $\pm3\pp$ (enriched
\Paired: $-0.56\pp$, $90\%$ CI $[-2.28,+1.17]$)\pid{P19.0}. At $n=200$ the same
test passed at $\pm5\pp$ but not at $\pm3\pp$.

\FloatBarrier
\section{Boundary Conditions and Diagnostic Findings}\label{app:negatives}

Each item marks a condition under which the method, or a benchmark, does or does
not apply. We give why it was considered, what we did, what we found, the lesson,
and what it changes about our claims.

\paragraph{Benchmarks outside the method's assumptions.}
Three further benchmarks did not fit the method's assumptions. CLEVRER has no actions
or paired videos, and its counterfactual questions name objects that anonymous
VideoSAUR or SAVi slots could not be matched to (soft IoU at most
$0.075$)\pid{P14.2b, P14.2c}, so named interventions could not be applied. In PHYRE,
a support-trained model located the objects that respond to a placed ball much better
than a matched control (support AUROC $0.810$ vs.\ $0.583$) but predicted their motion
worse (responsive cosine $-0.110$ vs.\ $+0.284$)\pid{P15.2b}: support identification is
not effect prediction. In OGBench cube, exact MuJoCo restoration made the pairs exact,
but the preliminary models were too undertrained for a conclusion\pid{P17.2}.

\paragraph{Propagation from anonymous pixel slots (single-seed IID diagnostic).}
\field{Why} to test the propagation result beyond object-aligned variables.
\field{What we did} a preflight on new pixel Push-T pairs with frozen VideoSAUR
slots and no object identities: $5{,}000$ pairs whose actions differ for $10$
physics steps and then share a $20$-step continuation\pid{P23.1}. $79.2\%$ of
pairs have a unique earliest responder with median onset margin $1$ step and
$2.3\%$ alignment ambiguity; each of the four slots is the earliest responder in
about a quarter of the pairs, so roles are not tied to slot indices. We then
trained the gated architecture with and without $\Ledge$ on seed $7$ and evaluated
it on the IID test split, with pass/fail rules fixed in advance\pid{P23.2}.
\field{What we found} against the test pairs' onset labels (anonymous slots have
no ground-truth graph), edge AUROC rose from $0.823$ without $\Ledge$ to $1.000$
with it and the mean gate on other slot pairs fell from $0.305$ to $0.0002$,
while effect RMSE changed by only $2.8\%$, inside the $5\%$ margin. Direct-target
top-1 was $1.0$ on the $583$ evaluable cases. A context term added in the same
study lowered the context shift by $14$--$15\%$, short of its $20\%$ target, in
line with Sec.~\ref{sec:q3}.
\field{Effect on claims} onset supervision trains gates on anonymous visual slots
that reproduce onset order (one seed, IID data); propagation is claimed only with
object-aligned variables.

\end{document}